  \providecommand\BibTeX{{%
    \normalfont B\kern-0.5em{\scshape i\kern-0.25em b}\kern-0.8em\TeX}}}
\theoremstyle{acmplain} 
\newtheorem{theorem}{Theorem}
\theoremstyle{acmplain}
\newtheorem{definition}{Definition}
\renewcommand\footnotetextcopyrightpermission[1]{} 
\newcommand{\cmtt}[1]{{\fontfamily{cmtt}\selectfont #1}}
\begin{document}
\fancyhead{}
\title{Generation is better than Modification: Combating High Class Homophily Variance in Graph Anomaly Detection}

\ccsdesc[500]{Information systems~Data mining}

\author{%
    Rui Zhang\textsuperscript{\rm 1},
    Dawei Cheng\textsuperscript{\rm 1,\rm2,\rm$*$},
    Xin Liu\textsuperscript{\rm 1},
    Jie Yang\textsuperscript{\rm 1},
    Yi Ouyang\textsuperscript{\rm 3},
    Xian Wu\textsuperscript{\rm 3},
    Yefeng Zheng\textsuperscript{\rm 3}}
\affiliation{
 \institution{
\textsuperscript{\rm 1}Department of Computer Science and Technology, Tongji University, Shanghai, China\\
    \textsuperscript{\rm 2}Shanghai Artificial Intelligence Laboratory, Shanghai, China\\
    \textsuperscript{\rm 3}Jarvis Research Center, Tencent YouTu Lab, Shenzhen, China\\
 }
\country{}
}
\affiliation{
 \institution{
  \{2050271,dcheng,2051277,2153814\}@tongji.edu.cn, 
    \{yiouyang,kevinxwu,yefengzheng\}@tencent.com
    }
    \country{}
}

\renewcommand{\shortauthors}{Rui Zhang, Dawei Cheng, Xin Liu, Jie Yang, Yi Ouyang, Xian Wu and Yefeng Zheng}

\begin{abstract}
Graph-based anomaly detection is currently an important research topic in the field of graph neural networks (GNNs). We find that in graph anomaly detection, the homophily distribution differences between different classes are significantly greater than those in homophilic and heterophilic graphs. For the first time, we introduce a new metric called \textbf{Class Homophily Variance}, which quantitatively describes this phenomenon. To mitigate its impact, we propose a novel GNN model named Homophily Edge Generation Graph Neural Network (\cmtt{HedGe}). 
Previous works typically focused on pruning, selecting or connecting on original relationships, and we refer to these methods as modifications. Different from these works, our method emphasizes generating new relationships with low class homophily variance, using the original relationships as an auxiliary.
\cmtt{HedGe} samples homophily adjacency matrices from scratch using a self-attention mechanism, and leverages nodes that are relevant in the feature space but not directly connected in the original graph. Additionally, we modify the loss function to punish the generation of unnecessary heterophilic edges by the model. 
Extensive comparison experiments demonstrate that \cmtt{HedGe} achieved the best performance across multiple benchmark datasets, including anomaly detection and edgeless node classification. The proposed model also improves the robustness under the novel Heterophily Attack with increased class homophily variance on other graph classification tasks.
\footnotetext{Corresponding author.}
\end{abstract}

\maketitle


\keywords{Graph Anomaly Detection, Graph Neural Network, Homophily and Heterophily}




\section{Introduction}

\begin{figure*}[ht]
    \centering
    \begin{minipage}{\textwidth}
        \begin{subfigure}[b]{0.33\linewidth}
            \centering
            \includegraphics[width=\linewidth]{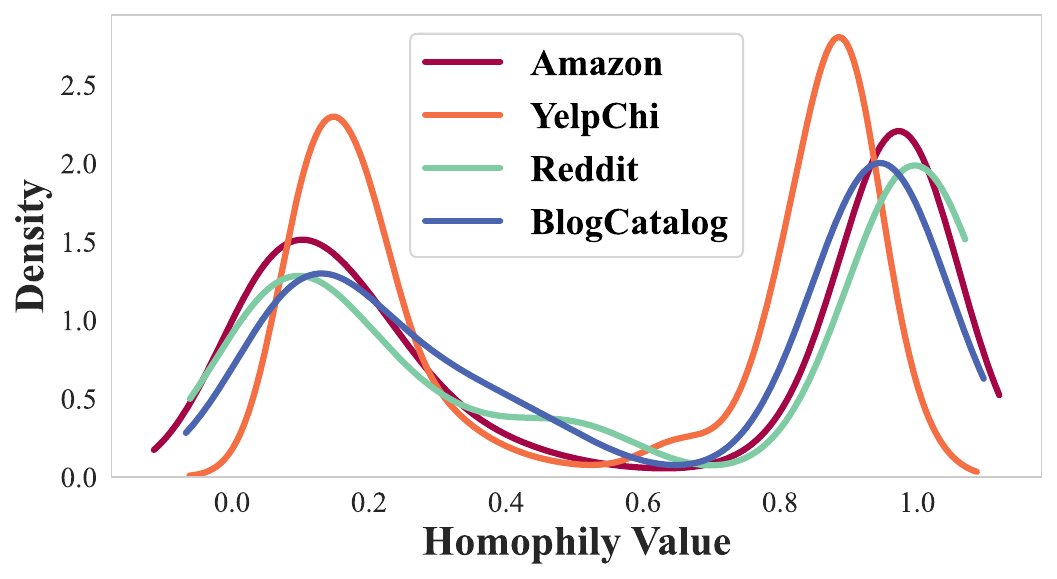}
            \caption{Anomaly detection datasets}
            \label{fig:image1}
        \end{subfigure}
        \hfill
        \begin{subfigure}[b]{0.33\linewidth}
            \centering
            \includegraphics[width=\linewidth]{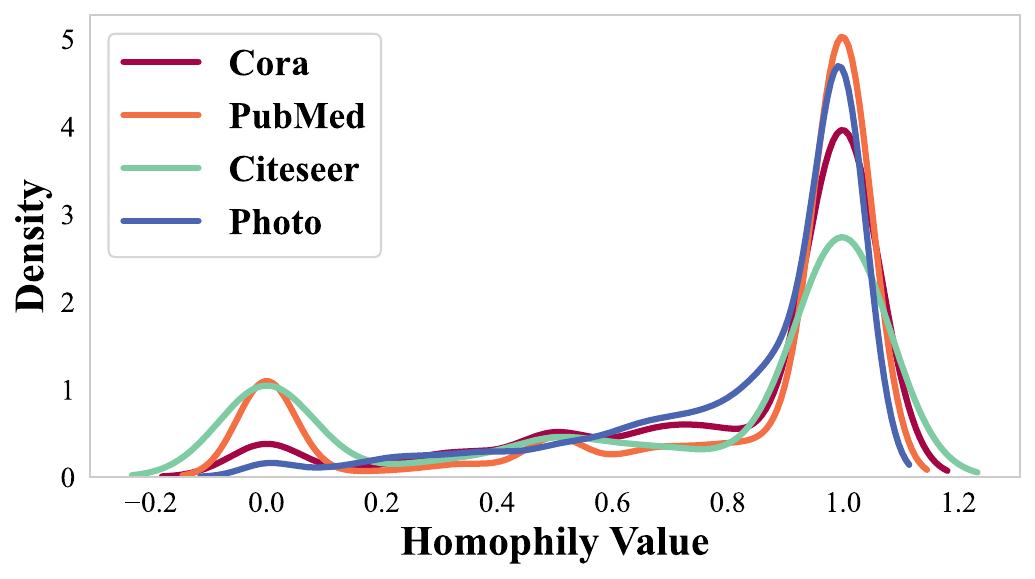}
            \caption{Homophilic graphs}
            \label{fig:image2}
        \end{subfigure}
        \hfill
        \begin{subfigure}[b]{0.33\linewidth}
            \centering
            \includegraphics[width=\linewidth]{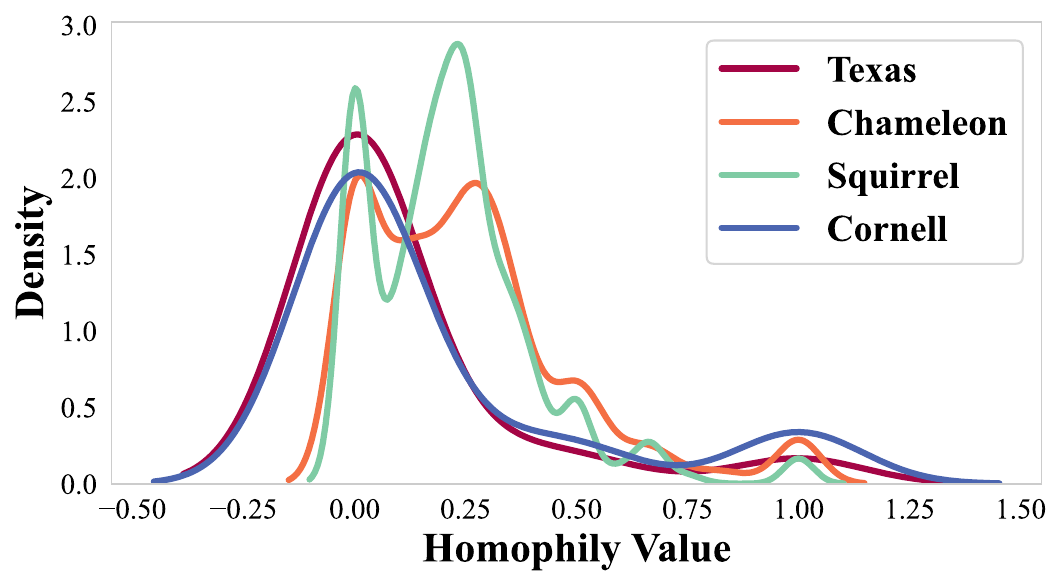}
            \caption{Heterophilic graphs}
            \label{fig:image3}
        \end{subfigure}
        \vspace{-12pt}
        \caption{Weighted homophily density distribution on different datasets. We use curves to fit the distribution for clarity.}
        \label{fig:fraud_homo}
    \end{minipage}
    \vspace{-8pt}
\end{figure*}

In graph anomaly detection (GAD), anomalous nodes refer to those in a network whose behavior or attributes significantly differ from most other nodes \cite{chandola2009anomaly}. GAD is important in fields like financial fraud detection \cite{cheng2023anti,deng2021graph}, cybersecurity \cite{ten2011anomaly}, social network analysis \cite{yang2019mining}, loan risk assessment \cite{cheng2020risk,niu2020iconviz} and industrial system monitoring \cite{chen2021interaction}, and has drawn great research interests. Graph Neural Networks (GNNs), with their effective handling and analysis of graph-structured data, are particularly suitable for GAD. Their abilities to aggregate information from neighborhoods help effectively detect anomaly nodes \cite{ma2021comprehensive}.

In the field of GAD, a significant amount of research has achieved notable results. 
Many studies selectively aggregate neighbor features and utilize supervised learning or feature similarity to differentiate between various neighbor pairs \cite{liu2021pick,dou2020enhancing}. Additionally, models based on spectral architecture, which leverage the characteristics of low-pass and high-pass filters, have effectively handled different structures and features, providing a new perspective for GAD \cite{tang2022rethinking, chai2022can}. Moreover, a series of studies have significantly boosted the efficiency of the learning process by modifying loss functions, thereby amplifying the effectiveness of anomaly detection  \cite{zhao2020error, zhao2021synergistic}. These research efforts have played a crucial role in improving the capability to identify anomalies.

However, these methods have not fully recognized the fundamental differences between anomaly detection and other scenarios.
We first define \textbf{homophily} as the high probability of a node being connected to other nodes with the same label, whereas \textbf{heterophily} is the opposite \cite{zhu2020beyond}. Similarly, a homophily (heterophily) edge is the one connecting two nodes with the same (different) label(s).
As shown in Figure \ref{fig:fraud_homo}, in the context of GAD, the weighted homophily distribution, where every class has an equal contribution, exhibits a distinct bimodal characteristic. This suggests a significant disparity in the level of homophily among various nodes, a characteristic absent in both homophilic and heterophilic graphs. For simplicity, we refer to these graphs as \textbf{generic} graphs. We have observed that currently, there is no metric to describe this distribution discrepancy. Therefore, in order to quantify the severity of this phenomenon, we introduced a new metric, \textbf{Class Homophily Variance} (CHV). It is designed to describe the degree of difference in the homophily distribution. In the following text, we analyze and discover that the value of this metric is significantly larger than others in anomaly detection scenarios. 
And we think that is why vanilla GNN models and GNNs designed for solving heterophily tasks \cite{pei2020geom,chien2020adaptive,yan2022two} are struggling to identify anomalies due to their inability to effectively handle this unique variance in homophily distribution.

Currently, some methods try to exploit the distribution of homophily in GAD but problems remain. For example, H$^2$-FDetector \cite{shi2022h2} opts for different aggregation relations for homophilic and heterophilic edges to aid in anomaly detection. GHRN \cite{gao2023addressing} utilizes high-pass filters to measure the degree of one-hop label change in the central node and uses node predictions to selectively remove heterophilic edges. GDN \cite{gao2023alleviating} attempts to identify the anomaly pattern to reduce the impact of heterophilic neighbors. Conversely, SparseGAD \cite{gong2023beyond} employs multilayer perceptron (MLP) combined with feature similarity for pruning and connecting nodes. However, issues still persist. Firstly, these methods are based on irreversible operations such as pruning, selection, and connection, which we refer to as modifications. These irreversible methods can disrupt the structural information of the graph to some extent, leading to performance declines in certain scenarios. Secondly, the vast distribution differences in original relationships render methods based on modifications less effective. These challenges inspire us to ask the question: can a model autonomously generate its relationships, with the original relations serving only as an auxiliary?

In order to answer this question, we propose a novel model named \textbf{\underline{H}}omophily \textbf{\underline{ed}}ge \textbf{\underline{Ge}}neration Graph Neural Network, abbreviated as \cmtt{HedGe}. The core of this model lies in generating homophilic edges rather than modification, to handle the problem of excessive differences in homophily distribution. It can effectively utilize latent neighborhood relationships in the feature space. Firstly, \cmtt{HedGe} applies position encoding to each node, providing additional contextual information for subsequent attention mechanisms. The model then calculates attention relationships between pairs of nodes, obtaining attention coefficient. Next, \cmtt{HedGe} employs a novel Edge Specific Gumbel Softmax mechanism to sample and generate homophilic adjacency matrices. This process is differentiable, ensuring the optimization efficiency of the model. 
The newly generated graphs and the original graph are then fed into GNNs for message passing. Simultaneously, the attention weights are used for weighted aggregation to integrate information from all nodes and several relationships are merged. Finally, to further enhance the model's performance, we modify the loss function to suppress the generation of heterophilic edges by the attention mechanism, ensuring the model focuses on extracting homophilic features.

To further validate the effectiveness and universality of our model, we design a special Heterophily Attack method to increase the CHV of the graph, thereby increasing the graph classification difficulty, to demonstrate the robustness of the proposed model on other tasks. Moreover, we also conducted experiments on several anomaly detection datasets, comparing the performance of the \cmtt{HedGe} model against other baselines. The edge-generating capability of \cmtt{HedGe} also makes edgeless classification possible on GNN. 
In summary, our contributions are as follows:

\begin{itemize}


\item We quantitatively describe the homophily distribution differences in GAD and theoretically discuss the impact of it. We also develop a novel Heterophily Attack to simulate high CHV scenarios in generic datasets.

\item {We design an innovative model that generates homophilic edges from scratch through attention mechanism and sampling methods. This mitigates the homophily distribution differences and also utilizes potential nodes.}

\item {We conduct extensive experiments on multiple benchmark datasets to demonstrate the effectiveness of our method and achieve the best performance in scenarios of graph anomaly detection, simulation, and edgeless node classification.}

\end{itemize}

\section{Analysis and PRELIMINARIES}
In this section, we first elucidate the concept of Class Homophily Variance and measure it on various datasets, followed by a theoretical analysis. Finally, we formulate our problem.

\subsection{Class Homophily Variance}

We introduce a new metric, Class Homophily Variance (CHV), to describe the variance in homophily distribution of a graph. This metric quantifies the homophily differences in classes among different nodes in the network. Specifically, CHV calculates the variance of node homophily across different classes, reflecting the degree of consistency in label distribution among different nodes in the network.

We define $\mathcal{H}(v) = \frac{\left|\{ u \in \mathcal{N}(v) : \text{label}(u) = \text{label}(v) \} \right|}{|\mathcal{N}(v)|}$ as a node's homophily value, where the $\mathcal{N}(v)$ is the neighborhood of the node $v$ and $|\mathcal{N}(v)|$ is the cardinality of set $\mathcal{N}(v)$. Next, we calculate the average homophily for each class $C$, $\mathcal{\bar{H}}(C) = \frac{\sum_{v \in V_C} \mathcal{H}(v)}{|V_C|}$, where $V_C$ is the set of node belonging to the class $C$. The formal definition of Class Homophily Variance is as follows.
\begin{definition}[Class Homophily Variance]\label{thm:df1}
Given a graph $\mathcal{G}$ with $k$ classes, and defining $\mathcal{S} = \{C_1, C_2, ..., C_k\}$ as the set of classes. Let the average inter-class homophily be $\mu = \frac{\sum_{C \in \mathcal{S}} \bar{\mathcal{H}}(C)}{|\mathcal{S}|}$. Then, the Class Homophily Variance of graph $\mathcal{G}$ is defined as follows,
\begin{equation}
\text{Var}(\bar{\mathcal{H}})_\mathcal{G} = \frac{\sum_{C \in \mathcal{S}} \left(\bar{\mathcal{H}}(C) - \mu \right)^2}{|\mathcal{S}|}.
\end{equation}
\end{definition}
Meanwhile, in order to judge the homophily difference in a single class, we design the in-class homophily variance, $\text{Var}_C(\mathcal{H}) = \frac{\sum_{v \in V_C} \left(\mathcal{H}(v) - \bar{\mathcal{H}}(C)\right)^2}{|V_C|}$. Finally, we calculate the weighed homophily average, $\mu_w = \frac{\sum_{i} w_i v_i}{\sum_{i} w_i}$, $w_i = \frac{1}{p_i}$, where $p_i$ is the class ratio of the node $v_i$. This metric is used to analyze the homophily average of a graph when every class has an equal contribution.

\subsection{Data Analysis}
\label{sec:Data}
We use our metrics to analyze several GAD datasets and compare them with some generic graphs, with the results shown in Table \ref{tab:metic} (Please find more results in Appendix \ref{sec:dataset}).

In GAD datasets, it is clear that the CHV is significantly higher than others. 
This is because in the GAD dataset, the homophily values of normal nodes are very close to 1, while the variance of homophily values for anomalous nodes is close to 0, a phenomenon that has also been corroborated by previous works \cite{gao2023addressing, gao2023alleviating, gong2023beyond}. Our metric quantitatively describes this phenomenon.
At the same time, the in-class homophily variance across all scenarios is relatively low, suggesting that within each type of dataset, the homophily distribution among classes is relatively balanced. Meanwhile, the weighted average in the GAD graphs tends to be neutral (close to 0.5), whereas in the homophily and heterophily graphs, the weighted average shows a clear bias. For instance, the weighted average for the Photo dataset is close to 0.8293, indicating a strong homophily tendency; while the value for the Squirrel dataset is only 0.2190, showing a distinct heterophily. 

These metrics highlight the stark differences in class distribution of GAD datasets compared to other datasets. This discrepancy accounts for why models that perform well on homophilic or heterophilic graphs fail to achieve similar results in graph anomaly detection. Our model is designed to alleviate the problem.

\begin{table}[t]
\centering
\caption{Homophily analysis on different datasets.}
\label{tab:metic}
\newcolumntype{C}{>{\centering\arraybackslash}X} 
\newcolumntype{Y}[1]{>{\centering\arraybackslash}p{#1}}
{\small
\begin{tabularx}{0.45\textwidth}{@{}Y{1.5cm}|Y{1.5cm}|CCC@{}}
\toprule
Types & Dataset & $\text{Var}(\bar{\mathcal{H}})_\mathcal{G}$ &  $\overline{\text{Var}}_C(\mathcal{H})$ & $\mu_w$ \\
\midrule
\multirow{2}{*}{Anomaly} & Amazon & 0.1655 & 0.0082 & 0.5579 \\
& YelpChi & 0.1101 & 0.0130 & 0.5373 \\
\midrule 
Homophily & Photo & 0.0171 & 0.0433 & 0.8293 \\
Heterophily & Squirrel & 0.0018 & 0.0320 &0.2190 \\

\bottomrule
\end{tabularx}}
\end{table}

\subsection{Theoretical Analysis}
In this part, we discuss the impact of inter-class homophily differences and CHV on the aggregation effectiveness of GNNs. To simplify the analysis, we use the binary classification problem in graph anomaly detection as an example to study the impact of homophily differences on the classification performance of GCN \cite{kipf2016semi}, thereby observing the effects on GNN models that utilize the homophily principle \cite{mcpherson2001birds}.

To clarify the assumption, based on previous works \cite{ma2021homophily, luan2023graph}, we analyze the Contextual Stochastic Block Model (CSBM) \cite{deshpande2018contextual}, which is often used to theoretically analyze the behavior of GNNs. We propose a variant of CSBM, CSBM for Class Homophily (CSBM-C). The graphs generated by CSBM-C contain two disjoint class of nodes, the two classes are respectively referred to as $\mathcal{C}_0$ and $\mathcal{C}_1$. For each node $i$, its original embedding $\mathbf{x}_i \sim N(\boldsymbol{\mu},\mathbf{I})$, where $\boldsymbol{\mu} = \boldsymbol{\mu}_k \in \mathbb{R}^l$, $i \in \mathcal{C}_k, k \in \{0, 1\}$, $\boldsymbol{\mu}_0 \neq \boldsymbol{\mu}_1$, and $l$ is the dimension of the embedding. For the nodes in $\mathcal{C}_0$ and $\mathcal{C}_1$, their degrees are $d$. Simultaneously, their neighbors are independently sampled. For node $i$, its neighbors comprise $h \cdot d$ nodes with the same label and $(1 - h) \cdot d$ nodes with a different label, where $h \in \left[0,1\right],h = h_k, i \in \mathcal{C}_k, k \in \{0, 1\}$. We denote the graph $\mathcal{G}$ generated by CSBM-C as $\mathcal{G} \sim \text{CSBM-C}(\boldsymbol{\mu}_0, \boldsymbol{\mu}_1, d, h_0, h_1)$. Simultaneously, we represent a single GCN aggregation as $\mathbf{h}_i = \frac{1}{d} \sum_{j \in \mathcal{N}(i)} \mathbf{x}_j$, where $\mathbf{h}_i$ is the representation obtained after $\mathbf{x}_i$  is convolved through a GCN and $\mathbf{h}_i \in \mathbf{h}$. Because the feature matrix being multiplied can be absorbed by the subsequent linear classifier, we simplify it here.
We analyze the effects of variations in $h_0$ and $h_1$ on classification and arrive at the following conclusion:

\begin{theorem}\label{thm:theorem1}
For a graph $\mathcal{G} \sim \text{CSBM-C}(\boldsymbol{\mu}_0, \boldsymbol{\mu}_1, d, h_0, h_1)$, for any node $i$ in $\mathcal{G}$, the smaller the value of $|h_0+h_1-1|$, the greater the probability that $\mathbf{h}_i$ will be misclassified by $\mathbf{h}$'s optimal linear classifier.
\end{theorem}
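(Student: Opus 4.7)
The plan is to reduce Theorem~\ref{thm:theorem1} to the standard Bayes-error formula for two Gaussians with a common covariance. Concretely, I will (i) show that conditional on $i\in\mathcal{C}_k$ the aggregated embedding $\mathbf{h}_i$ is Gaussian with a class-dependent mean but a common isotropic covariance, (ii) compute the gap between the two class means in closed form to expose the factor $(h_0+h_1-1)$, and (iii) invoke monotonicity of the Gaussian tail function to conclude.

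For step (i), fix any $i\in\mathcal{C}_k$. By the CSBM-C construction its $d$ neighbors are mutually independent, with $h_kd$ of them drawn from $N(\boldsymbol{\mu}_k,\mathbf{I})$ and $(1-h_k)d$ drawn from $N(\boldsymbol{\mu}_{1-k},\mathbf{I})$. Since $\mathbf{h}_i=\frac{1}{d}\sum_{j\in\mathcal{N}(i)}\mathbf{x}_j$ is a scaled sum of independent Gaussians, it is itself Gaussian, and its parameters can be read off directly:
\begin{equation}
\mathbf{h}_i\mid i\in\mathcal{C}_k\;\sim\;N\!\bigl(\boldsymbol{\nu}_k,\,\tfrac{1}{d}\mathbf{I}\bigr),\qquad \boldsymbol{\nu}_k:=h_k\boldsymbol{\mu}_k+(1-h_k)\boldsymbol{\mu}_{1-k}.
\end{equation}
For step (ii), a direct expansion gives
\begin{equation}
\boldsymbol{\nu}_0-\boldsymbol{\nu}_1\;=\;(h_0+h_1-1)\,(\boldsymbol{\mu}_0-\boldsymbol{\mu}_1),
\end{equation}
so the mean separation is $\|\boldsymbol{\nu}_0-\boldsymbol{\nu}_1\|=|h_0+h_1-1|\cdot\|\boldsymbol{\mu}_0-\boldsymbol{\mu}_1\|$. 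This identity is the crux of the argument: aggregation collapses the two post-convolution class means onto each other whenever $h_0+h_1\approx 1$, regardless of how large or small each individual $h_k$ is, which is exactly the first-moment phenomenon motivating the class-homophily-variance perspective of Section~\ref{sec:Data}.

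For step (iii), the Bayes-optimal linear classifier between two equal-covariance Gaussians $N(\boldsymbol{\nu}_0,\tfrac{1}{d}\mathbf{I})$ and $N(\boldsymbol{\nu}_1,\tfrac{1}{d}\mathbf{I})$ depends on $h_0,h_1$ only through the Mahalanobis separation
\begin{equation}
\Delta\;=\;\sqrt{d}\,\|\boldsymbol{\nu}_0-\boldsymbol{\nu}_1\|\;=\;\sqrt{d}\,|h_0+h_1-1|\cdot\|\boldsymbol{\mu}_0-\boldsymbol{\mu}_1\|,
\end{equation}
and in the balanced-class case equals $\Phi(-\Delta/2)$, which is strictly decreasing in $\Delta$. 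Because $\Delta$ is strictly increasing in $|h_0+h_1-1|$, shrinking $|h_0+h_1-1|$ strictly increases the error, yielding the theorem.

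The main delicate point is that the \emph{optimal linear classifier on $\mathbf{h}$} depends on the class priors, which the statement leaves implicit. I would handle this by invoking the standard fact that for any fixed prior, the Bayes-optimal linear classifier between two equal-covariance isotropic Gaussians is the Fisher-direction halfspace and its error is monotone in the Mahalanobis separation $\Delta$ (the prior only shifts the intercept, not the dependence on $\Delta$); combined with our closed-form expression for $\Delta$, this suffices. If a crisper statement is preferred, one can adopt balanced priors and obtain the error exactly as $\Phi(-\sqrt{d}\,|h_0+h_1-1|\cdot\|\boldsymbol{\mu}_0-\boldsymbol{\mu}_1\|/2)$, from which monotonicity of $\Phi$ finishes the proof immediately.
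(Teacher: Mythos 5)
Your proposal is correct and takes essentially the same route as the paper's proof: both establish the class-conditional Gaussians $N(\boldsymbol{\nu}_k,\tfrac{1}{d}\mathbf{I})$ with $\boldsymbol{\nu}_k=h_k\boldsymbol{\mu}_k+(1-h_k)\boldsymbol{\mu}_{1-k}$, expose the mean separation $|h_0+h_1-1|\,\|\boldsymbol{\mu}_0-\boldsymbol{\mu}_1\|_2$, and conclude by monotonicity. The only difference is cosmetic: the paper works with the distance from each class mean to the mid-perpendicular hyperplane, $\tfrac{1}{2}|h_0+h_1-1|\,\|\boldsymbol{\mu}_0-\boldsymbol{\mu}_1\|_2$, as a proxy for the misclassification probability, whereas you state the exact balanced-prior Bayes error $\Phi(-\Delta/2)$ and explicitly flag the prior-dependence of the intercept, which makes your version marginally more complete.
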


The proof of the Theorem \ref{thm:theorem1}  can be found in Appendix \ref{sec:proof}. At the same time, CHV is directly proportional to $(h_0-h_1)^2$ (Proof can be found in Appendix \ref{sec:proofc}) under this assumption. In the GAD datasets, due to the high homophily value of normal nodes (close to 1), and low homophily value of anomalies (close to 0), in extreme cases, if $h_0=0$ and $h_1=1$, then the CHV reaches its maximum value, and the probability of misclassification is the highest. In generic graphs, where $h_0$ and $h_1$ are close to each other and both near to $0$ or $1$, the value of CHV is relatively lower, and the probability of misclassification is also smaller. 
Classification can also be challenging when the CHV is small, such as when $h_0 = h_1 = 0.5$. However, our previous data analysis has shown that this situation does not occur in both GAD or generic datasets. At the same time, CHV describes the characteristics of the GAD scenario more intuitively and can be easily extended to multi-class scenarios.

\begin{figure*}[t]
\includegraphics[width=1.0\textwidth]{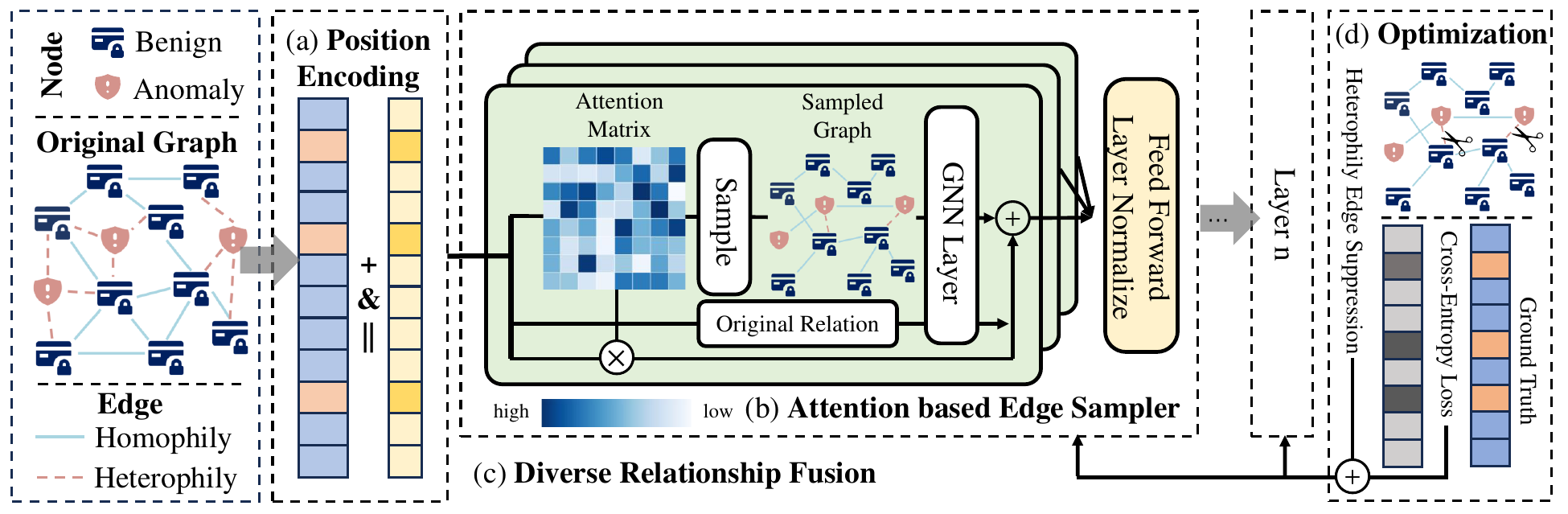}
\caption{ The overall architecture of the proposed \cmtt{HedGe}. (a) We first apply position encoding to enhance node information. (b) Then we calculate node relationships and sample new relationships through self-attention. (c) Next, we aggregate multiple relationships. (d) Finally, we penalize the attention matrix to suppress the generation of heterophilic edges. }
\label{fig:model}
\end{figure*}

\subsection{Problem Formulation}

In this task, we consider graph data as input, defining graph structured data as $\mathcal{G} = \{\mathcal{V}, \{\mathcal{A}_r\}, X\}$, where $\mathcal{V}$ represents the set of nodes, including both benign node and abnormal nodes, $\{\mathcal{A}_r\}$ represents the set of relations, with the total number of nodes $n$ in the dataset denoted as $|\mathcal{V}|$, and $r \in \{1, 2, ..., R\}$ indicating the relations, $R$ is the number of relations in the dataset. $\forall \mathcal{A} \in \{\mathcal{A}_r\}$, $\mathcal{A}$ is a binary matrix belonging to $\{0, 1\}^{n \times n}$. $\mathcal{A}$ represents an undirected graph, meaning if there is a connection between nodes $i$ and $j$, then $\mathcal{A}_{ij} = 1$ and $\mathcal{A}_{ji} = 1$. $X \in \mathbb{R}^{n \times f}$ represents the attributes of the nodes, and $f$ is the dimension. With the attribute of any node $v_i$, $X_{v_i} \in \mathbb{R}^f$. We define graph anomaly detection as a semi-supervised learning task. $\forall v \in \mathcal{V}, Y(v) \in \{0, 1\}$, where 0 represents a benign node and 1 represents an anomaly node. Meanwhile, $Y \in \{Y_{train}, Y_{val}, Y_{test}\}$, where $Y_{val}$ and $Y_{test}$ are not visible to the model during training. We train the model using $Y_{train}$ and use $Y_{val}$ to select the best model. We use GNN as the backbone model to achieve the lowest error on $Y_{test}$.

\section{proposed method}
In this section, we specifically introduce the technical details of \cmtt{HedGe} as shown in Figure \ref{fig:model}. We firstly describe how our model applies position encoding to each node to enhance the node's representation ability.  Next, we use an attention mechanism to generate attention matrices and then sample new relational graphs through the attention coefficients.  Following that, we combine multiple relationships, including original edges, generated relationships, and the attention matrix sum, and pass them to the next layer.  Finally, we present our optimization objective.

\subsection{Position Encoding}

\subsubsection{Degree Position Encoding}

Inspired by previous work \cite{wu2023decor}, we recognize that in a graph, the degree information of anomaly and benign nodes can effectively reflect their importance. This is especially evident when dealing with multi-relational graphs. The degree information is effective in capturing structural anomalies and identifying abnormal nodes. Abnormal nodes often differ noticeably from normal nodes in their degree distribution. This difference in distribution provides an intuitive and effective starting point for anomaly detection.

Assuming in a multi-relational graph, we have $R$ different types of relationships, and for each node $v$, we can define its degree under the $r$-th type of relationship as $\textbf{deg}_r(v)$. Therefore, the degree encoding of node $v$, $\textbf{PE}_{Degree}(v)$, can be constructed by 
\begin{equation}
\textbf{PE}_{Degree}(v) = \text{Concat}(\textbf{deg}_1(v), \textbf{deg}_2(v), \ldots, \textbf{deg}_R(v)),
\end{equation}
where we concatenate degrees under all $R$ types of relationships.

\subsubsection{Laplacian Position Encoding}
To capture the structural information of the entire graph, we utilize Laplacian Position Encoding to represent the position of nodes in the whole graph. The Laplacian matrix is an important tool for describing the spectral characteristics of a graph. It contains not only the connection information between nodes, but also reflects the structural information of the entire graph. Using the Laplacian matrix for position encoding enables the model to effectively capture and utilize global and local structural information when dealing with complex graph structures.

We calculate Laplacian Position Encoding by
\begin{equation} 
\begin{split} 
& \textbf{L} = \textbf{I} - \textbf{D}^{-\frac{1}{2}} \textbf{A} \textbf{D}^{-\frac{1}{2}},\\
& \textbf{L}\textbf{v} = \lambda \textbf{v},\\
& \textbf{V}_{\text{selected}} = \textbf{V}[:, 1:k+1],\\
& \textbf{pe}_i = \text{sign}_i \cdot \textbf{v}_i, \\
& \textbf{PE}_{L}^{r} = [ \textbf{pe}_1| \textbf{pe}_2| \ldots| \textbf{pe}_k], \\
& \textbf{PE}_{L}(v) = \text{Concat}(\textbf{PE}_{L}^{1}, \textbf{PE}_{L}^{2}, \ldots, \textbf{PE}_{L}^{R})[v,:],
\end{split}
\end{equation}
where $\textbf{A}$ is the adjacency matrix of the graph, $\textbf{D}$ is the diagonal degree matrix, $\textbf{I}$ is the identity matrix, $\lambda$ are the eigenvalues and $\textbf{v}$ are the corresponding eigenvectors of the Laplacian matrix respectively. Then we select the first $k$ non-trivial eigenvectors. Each of the selected eigenvectors is multiplied by a random $\text{sign}_i$ (+1 or -1) and $\textbf{pe}_i$ is the position encoding vector. 


\subsubsection{Label Encoding}

To mitigate the sample imbalance in anomaly node detection, we adopt downsampling strategy \cite{liu2008exploratory}, which doesn't include all training labels in a single training epoch. To further utilize label information, we use the labels from the training set that are not trained in a single epoch and set up label encoding. We classify unknown labels, including those in the validation set, test set, and the labels needed for the current training, as 2 for example, indicating they are unknown. Noting that labels 0 and 1 are preserved for to denote benign and anomaly nodes respectively.  Thus, $\textbf{L}(v)$ is the label mapping function, which outputs a label $l \in \{0,1,2\}$. We define an encoding function $\textbf{E}:\{0,1,2\} \rightarrow \mathbb{R}_{d}$, where $d$ is the embedding dimension, mapping the labels into a $d$-dimensional space. Label encoding  is calculated by
\begin{equation}
\textbf{PE}_{Label}(v) = \textbf{E}(\textbf{L}(v)).
\end{equation}
\subsubsection{Encoding Aggregation}
After calculating three types of position encodings, we perform position aggregation through 
\begin{equation}
\textbf{h}_v = \text{Concat}(\textbf{X}(v),\textbf{PE}_{Degree}(v) ,\textbf{PE}_{L}(v)) + \textbf{PE}_{Label}(v)
\end{equation}
to obtain the final embedding for each node. 

\subsection{Attention based Edge Sampler}
\subsubsection{Attention coefficient.}
After obtaining the embedding of the nodes, next we need to calculate the attention coefficients between each pair of nodes, in order to perform sampling and aggregation. We refer to the Transformer \cite{vaswani2017attention} architecture and adopt the scaled dot-product attention mechanism to calculate the attention coefficients,
\begin{equation}
a_{ij} = \text{Softmax}\left( \frac{(\textbf{W}_q \cdot \textbf{h}_i) \cdot (\textbf{W}_{k} \cdot \textbf{h}_j)}{\sqrt{d_k}} \right),
\end{equation}
where $a_{ij}$ represents the attention coefficient from node $i$ to $j$, $\textbf{W}_q$ and $\textbf{W}_k$ are learnable weight matrices, and $d_k$ denotes the dimension of the key vectors. The scaled dot-product attention mechanism, as compared to the attention mechanism introduced by GAT, offers a significant enhancement in computational efficiency. This is achieved by enabling the computation of attention across the entire graph in a matrix form using dot products. 

\subsubsection{Edge Specific Gumbel Softmax}
After obtaining the attention coefficients between each node pair, it's typical to employ methods like K-Nearest Neighbors (KNN) \cite{cover1967nearest} or Gumbel-Top-k trick \cite{kool2019stochastic} to identify or sample potential neighbors among the k-nearest ones. However, a limitation of these methods is that they result in each node having a predetermined, fixed number of connected neighbors, which does not accurately reflect the more variable and dynamic nature of real-world networks.  At the same time, these techniques involve selection and sorting operations, which are inherently non-differentiable. To address these problems, we choose to perform independent Gumbel-Softmax \cite{jang2016categorical} sampling on each attention coefficient.

This method is employed for sampling from a categorical distribution, facilitating gradient-based optimization. We focus on a binary case, employing a probability vector 
$\left[a,1 - a\right]$, where $a$ is in the range $\left[0, 1\right]$. Let \( G_i \) be independently sampled from a $Gumbel(0, 1)$ distribution, \( G_i = -\log(-\log(u_i)) \), where \( u_i \) is independently drawn from a uniform distribution in $[0, 1]$. The Edge Specific Gumbel Softmax distribution, ESGS for simple, is given by:
\begin{equation}
\text{ESGS}(a) = \left[\frac{\exp(L_1 / \tau)}{\exp(L_1 / \tau) + \exp(L_2 / \tau)},\frac{\exp(L_2 / \tau)}{\exp(L_1 / \tau) + \exp(L_2 / \tau)}\right],
\end{equation}
where $L_1 = \log(a) + G_1, L_2 = \log(1-a) +G_2$, and $\tau$ is the temperature parameter.
The temperature parameter controls the entropy of the output distribution, meaning that a lower temperature results in an output closer to a hard discrete distribution, while a higher temperature leads to a more uniform distribution. 
Next, we can sample whether there is an edge between nodes i and j by 
\begin{equation}
p_{ij} = \text{ESGS}(\lambda \cdot a_{ij}) \cdot \begin{bmatrix} 1 \\ 0 \end{bmatrix},
\end{equation}
where $\lambda$ is a hyperparameter to control the number of sampled edges. By employing a reparameterization trick \cite{kingma2014autoencoding}, 
we transfer the non-differentiable parts to the random sampling of $u_i$, ensuring the differentiability of the other parts.
Meanwhile, we use the straight-through trick to convert the continuous relaxation output of Gumbel-Softmax into one-hot encoding while ensuring an approximate gradient. Consequently, $ e_{ij} \approx p_{ij}, e_{ij} \in \{0, 1\}$. And we use $ e_{ij} $ to build a adjacency matrix $ \mathcal{A}_{ij}^{G} = \text{min}(e_{ij}+e_{ji}, 1)$ and send it to a GCN layer, for node $v_i$
\begin{equation}
\textbf{h}_{{G}_{v_i}} = \sigma \left( \sum_j \frac{1}{c_{ij}} \textbf{h}_{v_j} \textbf{W}_g \right),
\end{equation}
where $\sigma$ represents an activation function, $\textbf{W}_g$ is a learnable matrix and $c_{ij} = \sqrt{deg(i)deg(j)}$ in $\mathcal{A}^{G}$. In practice, we can choose to use the matrix form of the adjacency matrix for computation or adopt the discrete form of the adjacency matrix for acceleration, but this step will lose the gradient.

\subsection{Relation Fusion and Optimization}

\subsubsection{Diverse Relationship Fusion}
We apply GraphSAGE \cite{hamilton2017inductive} or GCN \cite{kipf2016semi} as the aggregation function to the original relationships, assuming we use GraphSAGE here. For the $r$-th type of relationship, the aggregation formula is 
\begin{equation}
\textbf{h}_{{O}_{v}}^r = \sigma(\textbf{W}_s \cdot \text{MEAN}(\{\textbf{h}_v\} \cup \{\textbf{h}_u, \forall u \in \mathcal{N}_r (v)\})) ,
\end{equation}
where $\mathcal{N}_r (v)$ is the neighborhood of node in relation $r$ and $\textbf{W}_s$ is a learnable matrix.

Then, we aggregate the original attention coefficients as weighted relationships and add up multiple relationships.
Now that we have three types of features, original relationship features, sampled relationship features, and attention coefficient features, and we merge these three relationships.
Finally, we use layer normalization $\text{LN}(.)$ and a feed-forward layer $\text{FFN}(.)$ to reduce numerical instability and facilitate learning. So the layer $k$ of the \cmtt{HedGe} is 
\begin{equation}
\begin{split}
& \textbf{h}_{A} = \sum_{j} a_{ij} \cdot \textbf{W}_v\cdot\textbf{h}_j, \\
& \textbf{h}^r_k = \text{Concat}(\textbf{h}_{G}^{r}, \textbf{h}_O^r ,\textbf{h}_{A}^r), \\
& \textbf{h}_{k} = \text{FFN}(\text{LN}(\text{MEAN}(\textbf{h}_{k-1}^1, \textbf{h}_{k-1}^2, \ldots, \textbf{h}_{k-1}^R))).
\end{split}
\end{equation}

\subsubsection{Optimization Objective}

In a \cmtt{HedGe} with k layers, the final representation of a node is denoted as $\textbf{h}_k$. For node classification, we employ a MLP and the Softmax function to get the possibility $p_v$, and optimize the model using the cross-entropy loss,
\begin{equation}
    \mathcal{L}_c = -\sum_{v \in \mathcal{V}} \left[ y \cdot \log(p_v) + (1 - y) \cdot \log(1 - p_v) \right].
\end{equation}

To suppress the generation of heterophilic edges, we add a heterophily edge suppression penalty term $\mathcal{L}_{p}$. This penalty term involves calculating the squared attention coefficients between nodes with differing labels, 
\begin{equation}
\mathcal{L}_{p} =\sum_{a} \left[ a_{ij}^2 \cdot 1_{\{ label(i) \neq label(j)\}}\right].
\end{equation}
Note that we only use labels from the training set for punishment to avoid label leakage. This penalty reduces the attention values for nodes with different labels, thereby decreasing the likelihood of generating heterophilic edges. By suppressing the generation of heterophily edges, we ensure that the generated edges have a lower CHV, making the dataset's relationships closer to generic graphs to enhance the learning capability of GNNs.

Our final optimization objective is
\begin{equation}
\mathcal{L} = \mathcal{L}_c + \alpha \mathcal{L}_{p} +\beta \|\theta\|^2_2,
\end{equation}
where $\alpha, \beta$ are hyperparameters and $\theta$ denotes the parameters of the model which need to be trained.

\begin{table*}[t]
\centering
\caption{Performance comparison of different models for anomaly detection.}
\label{tab:main-exper}
\newcolumntype{L}[1]{>{\raggedright\arraybackslash}p{#1}}
\newcolumntype{C}{>{\centering\arraybackslash}X} 

\begin{tabularx}{0.95\textwidth}{@{}L{2.3cm}|CC|CC|CC|CC|CC|CC@{}}
\toprule
\multirow{2}{*}{Method} & \multicolumn{2}{c|}{Amazon} & \multicolumn{2}{c|}{YelpChi} & \multicolumn{2}{c|}{BlogCatalog} & \multicolumn{2}{c|}{Reddit} & \multicolumn{2}{c|}{Amazon} & \multicolumn{2}{c}{YelpChi} \\
\cmidrule(r){2-13}
& AUC & AP & AUC & AP & AUC & AP & AUC & AP & AUC & AP & AUC & AP \\
\midrule
Training ratio & \multicolumn{8}{c|}{40\%} & \multicolumn{4}{c}{1\%} \\
\midrule
GCN                 & 85.32 & 35.14 & 60.34 & 23.69 & 88.09 & 46.52 & 65.04 & 6.70 & 77.97 & 23.74  & 54.64 & 17.68 \\
GAT                 & 93.04 & 60.67 & 59.82 & 23.48 & 71.25 & 21.45 & 65.19 & 5.53 & 80.56 & 33.29 & 54.87 & 16.95 \\
GraphSAGE           & 95.85 & 84.74 & 80.44 & 46.83 & 82.34 & 45.09 & 64.11 & 6.49 & 92.42 & 78.32  & 72.86 & 32.44 \\ 
\midrule
MixHop            & 96.03 & 86.44 & 79.56 & 45.29 & 88.31 & 48.20 & 65.63 & 5.44 & 92.42 & 78.32  & 72.86 & 32.44 \\
GPRGNN              & 94.75 & 76.75 & 73.11 & 32.97 & 81.93 & 43.46 & 62.93 & 5.45 & 93.52 & 74.51 & 67.66 & 28.70\\
\midrule
CAREGNN            & 88.48 & 69.24 & 77.96 & 36.63 & 69.40 & 27.13 & 67.21 & 6.72 & 87.27 & 73.93 & 75.29 & 36.07 \\
PCGNN              & 95.95 & 80.88 & 80.16 & 38.86 & 66.75 & 22.55 & 64.66 & 5.84 & 89.47 & 75.34 & 73.25 & 30.95 \\
AMNet               & 95.11 & 83.64 & 85.85 & 57.77 & 63.54 & 26.26 & 63.64 & 8.55 & 87.86 & 74.92 & 73.16 & 36.59 \\
H$^{2}$-FDetector   & 96.46 & 85.33 & 88.98 & 60.98 & 83.03 & 35.57 & 66.73 & 7.80 & 87.17 & 63.10 & 79.24 & 43.55 \\
BWGNN               & 97.99 & 90.09 & 90.22 & 63.78 & 79.37 & 37.39 & 71.37 & 8.96 & 89.10 & 80.40 & 77.52 & 37.66 \\
GDN                 & 97.10 & 87.37 & 90.26 & 66.42 & 70.70 & 28.55 & 69.19 & 6.94 & 83.30 & 61.48 & 73.39 & 38.68 \\
SparseGAD           & 97.03 & 89.17 & 88.61 & 66.01 & 70.16 & 25.60 & 66.12 & 5.98 & 93.67 & 81.40 & 78.73 & 40.77 \\

\midrule
\cmtt{HedGe}-w/pos        & 97.88 & 91.23 & 90.33 & 69.18 & 93.07 & 43.91 & 72.12 & 9.62 & 94.89 & 78.90 & 78.38 & 41.09 \\
\cmtt{HedGe}-w/sam        & 98.01 & 90.98 & 89.51 & 66.40 & 90.83 & 40.21 & 71.40 & 8.16 & 92.69 & 72.05 & 78.22 & 42.35 \\
\cmtt{HedGe}-w/loss       & 97.72 & 89.34  & 90.54 & 69.14 & 92.58 & 42.73 & 72.45 & 9.57 & 95.39 & 80.80 & 80.24 & 44.23 \\
\midrule 
\cmtt{HedGe}            & \textbf{98.25} & \textbf{92.30} & \textbf{91.29} & \textbf{70.68}  & \textbf{94.35} & \textbf{50.83} & \textbf{73.19} & \textbf{9.64} & \textbf{95.83} & \textbf{85.82} & \textbf{81.17} & \textbf{44.90}\\

\bottomrule
\end{tabularx}
\end{table*}

\section{Experiments}

In this section, we conduct a comprehensive analysis of the effectiveness of \cmtt{HedGe}. We evaluate it on four graph anomaly detection datasets, and then perform attacks on two generic datasets using our defined Heterophily Attack to simulate high Class Homophily Variance scenarios and test performance. We  explore its capabilities in edgeless node classification scenarios. We also execute ablation studies and engage in visualization techniques to verify that the model operates as anticipated.

\subsection{Experimental Setup}
\subsubsection{Datasets.} Following previous works \cite{zhu2020beyond,gao2023alleviating}, we conduct comprehensive evaluations of \cmtt{HedGe} in the GAD scenario on four benchmark datasets, including three real datasets: Amazon \cite{mcauley2013amateurs}, YelpChi \cite{rayana2015collective}, Reddit \cite{kumar2019predicting}, and an injected dataset, BlogCatalog \cite{tang2009relational}.  Additionally, we test two generic node classification datasets: Amazon co-purchase graphs Photo \cite{shchur2018pitfalls} and the citation graphs PubMed \cite{yang2016revisiting} and evaluate different models' performance under Heterophily Attack to simulate high CHV and show the universality of our model. For detailed descriptions and statistical data of these datasets, please refer to the Appendix \ref{sec:detail}.
\subsubsection{Baselines.}
We selected several representative models or the latest state-of-the-art models for comparison. For more detailed description, please refer to Appendix \ref{sec:base}. 

\begin{itemize}
    \item \textbf{GCN} \cite{kipf2016semi}, \textbf{GAT} \cite{velivckovic2018graph} and \textbf{GraphSAGE} \cite{hamilton2017inductive}. These models represent the most basic and widely used GNNs.
    \item \textbf{MixHop} \cite{abu2019mixhop} and \textbf{GPRGNN} \cite{chien2020adaptive} are models designed for overcoming over-smoothing and heterophily.
    \item \textbf{CAREGNN} \cite{dou2020enhancing}, \textbf{PCGNN} \cite{liu2021pick}, \textbf{H$^{2}$-FDetector} \cite{shi2022h2}, \textbf{AMNet} \cite{chai2022can}, \textbf{BWGNN} \cite{tang2022rethinking}, \textbf{GDN} \cite{gao2023alleviating} and \textbf{SparseGAD} \cite{zhu2020beyond} are anti-fraud models or GAD models. They are state-of-the-art models in GAD scenario.
    \item \textbf{KNN} \cite{cover1967nearest}, \textbf{SVM} \cite{boser1992training} and \textbf{MLP} \cite{rosenblatt1958perceptron} are classic machine learning algorithms. \textbf{Random Forest} \cite{breiman2001random}, \textbf{CATBoost} \cite{prokhorenkova2018catboost}, \textbf{XGBoost} \cite{chen2016xgboost} and \textbf{LightGBM} \cite{ke2017lightgbm} are classic decision tree-based machine learning algorithms. We compared them in edgeless classification scenarios.
\end{itemize}

\subsubsection{Metrics.}
Following previous works \cite{dou2020enhancing,gong2023beyond}, we use Area Under the Receiver Operating Characteristic curve (AUC) and Average Precision (AP) as the evaluation metrics in the anomaly detection scenario. AUC effectively measures the model's ability to discriminate between different classes by considering its performance across all possible classification thresholds. Notably, the sensitivity of AUC to imbalanced datasets makes it an ideal indicator for evaluating model performance in GAD scenarios. AP, which takes into account precision and recall at different thresholds, provides a more comprehensive performance evaluating for imbalanced classes. In generic datasets, we follow previous works \cite{kipf2016semi,chien2020adaptive}, we use accuracy as the evaluation criterion. 

\begin{table*}[th]
\centering
\caption{Performance comparison of models without edges for anomaly detection.}
\label{my-label}
\newcolumntype{L}[1]{>{\raggedright\arraybackslash}p{#1}}
\newcolumntype{C}{>{\centering\arraybackslash}X} 

\begin{tabularx}{0.95\textwidth}{@{}L{2.5cm}|CC|CC|CC|CC|CC|CC@{}}
\toprule
\multirow{2}{*}{Method} & \multicolumn{2}{c|}{Amazon} & \multicolumn{2}{c|}{YelpChi} & \multicolumn{2}{c|}{BlogCatalog} & \multicolumn{2}{c|}{Reddit} & \multicolumn{2}{c|}{Amazon} & \multicolumn{2}{c}{YelpChi} \\
\cmidrule(r){2-13}
& AUC & AP & AUC & AP & AUC & AP & AUC & AP & AUC & AP & AUC & AP \\
\midrule
Training ratio & \multicolumn{8}{c|}{40\%} & \multicolumn{4}{c}{1\%} \\
\midrule
KNN             & 91.87 & 81.60 & 75.85 & 36.69 & 62.71 & 20.81 & 57.96 & 4.80 & 86.36 & 68.05 & 64.68 & 22.16 \\
SVM             & 93.76 & 82.80 & 81.34 & 50.25 & 67.13 & 26.44 & 58.93 & 4.69 & 90.64 & 70.27 & 70.55 & 29.37 \\
MLP             & 97.04 & 86.95 & 82.42 & 31.27 & 58.15 & 13.30 & 59.64 & 4.71 & 74.00 & 42.15 & 70.42 & 31.27 \\
Random Forest   & 97.10 & 86.45 & 81.23 & 51.89 & 69.32 & 29.02 & 65.82 & 5.64 & 94.71 & 67.14 & 76.64 & 37.44 \\
CATBoost        & 97.20 & 89.44 & 83.11 & 54.36 & 66.70 & 27.96 & 62.24 & 4.58 & 95.08 & 82.65 & 73.19 & 32.16 \\
XGBoost         & 96.90 & 87.35 & 84.83 & 58.17 & 67.61 & 25.18 & 66.11 & 7.02 & 85.24 & 69.95 & 77.75 & 38.60 \\
LightGBM        & \textbf{97.95} & 89.30 & 85.71 & 60.08 & 72.36 & 29.54 & 66.78 & 6.51 & 93.41 & 67.14 & 76.25 & 35.68 \\
\midrule
\cmtt{HedGe}-w/edges            & 97.30 & \textbf{90.71} & \textbf{89.59} & \textbf{63.67} & \textbf{73.68} & \textbf{36.59} & \textbf{68.45} & \textbf{9.52} & \textbf{95.10} & \textbf{83.91} & \textbf{80.00} & \textbf{43.53}\\

\bottomrule
\end{tabularx}
\end{table*}

\subsection{Anomaly Detection Performance}

We conducted experiments on four benchmark datasets with 40\% of the labels used for training. To validate scenarios with scarce labels, we conducted experiments with only 1\% of the labels for training on Amazon and YelpChi dataset. We divided the remaining dataset into halves for the validation and test set respectively. 

Experimental results are presented in Table \ref{tab:main-exper}. It is evident that our model achieved the best performance across four anomaly detection datasets. We can see that the performance of vanilla GNNs is not optimal on GAD datasets, unable to adapt to high CHV situations, providing empirical evidence for Theorem \ref{thm:theorem1}.
Trained on 40\% of the Amazon, YelpChi, and BlogCatalog datasets, our model significantly improved performance. It increased the AP by at least 2\% and similarly raised AUC compared to competing models.
In scenarios with scarce labels, our improvement is also significant. For example, in the Amazon dataset with only 1\% of the labels used for training, our model achieved an absolute improvement of 2.16\% in AUC and 4.45\% in AP compared to the best-performing baseline, SparseGAD. It can also be observed that anomaly detection models did not perform satisfactorily on BlogCatalog, as their modifications and filtering of relationships severely hindered their detection of structural anomalies. However, our model retained the original structure and performed well on this dataset, exceeding the best-performing model, MixHop, by 6.04\% in AUC and outperforming the best GAD model, H$^2$-FDetector, by an impressive 11.32\% in AUC and 15.27\% in AP.

\begin{table}[t]
\centering
\caption{Class Homophily Variance under Heterophily Attack}
\label{tab:heter}
\newcolumntype{C}{>{\centering\arraybackslash}X} 
{\small
\begin{tabularx}{0.45\textwidth}{@{}CCCCCCC@{}}
\toprule
Ratio & 0\% & 1\% & 3\% & 5\% & 7\% & 10\% \\ 
\midrule
Photo & 0.0171 & 0.0194 & 0.0278 & 0.0401 & 0.0654 & 0.0754 \\
PubMed & 0.0044 & 0.0066 & 0.0137 & 0.0247 & 0.0410 & 0.0814 \\
\bottomrule
\end{tabularx}}
\end{table}

\begin{figure}[t]
    \centering
    \begin{minipage}{0.47\textwidth}
        \begin{subfigure}[b]{0.49\linewidth}
            \centering
            \includegraphics[width=\textwidth]{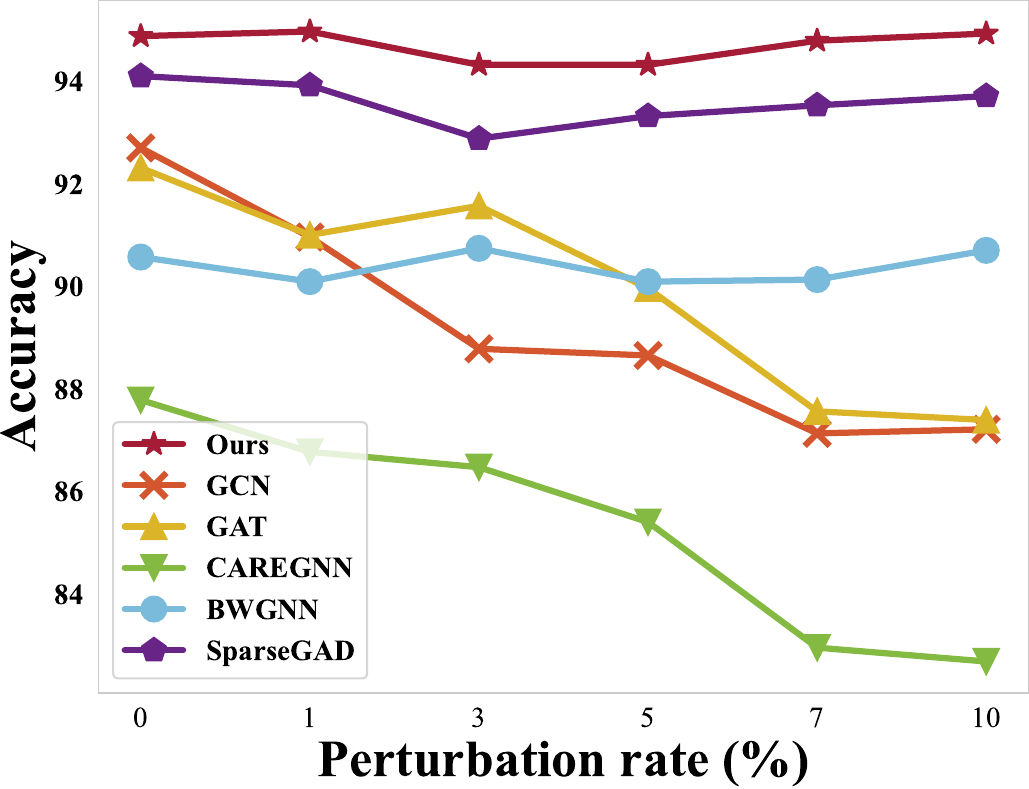}

            \caption{Photo}
        \end{subfigure}
        \begin{subfigure}[b]{0.49\linewidth}
            \centering
            \includegraphics[width=\textwidth]{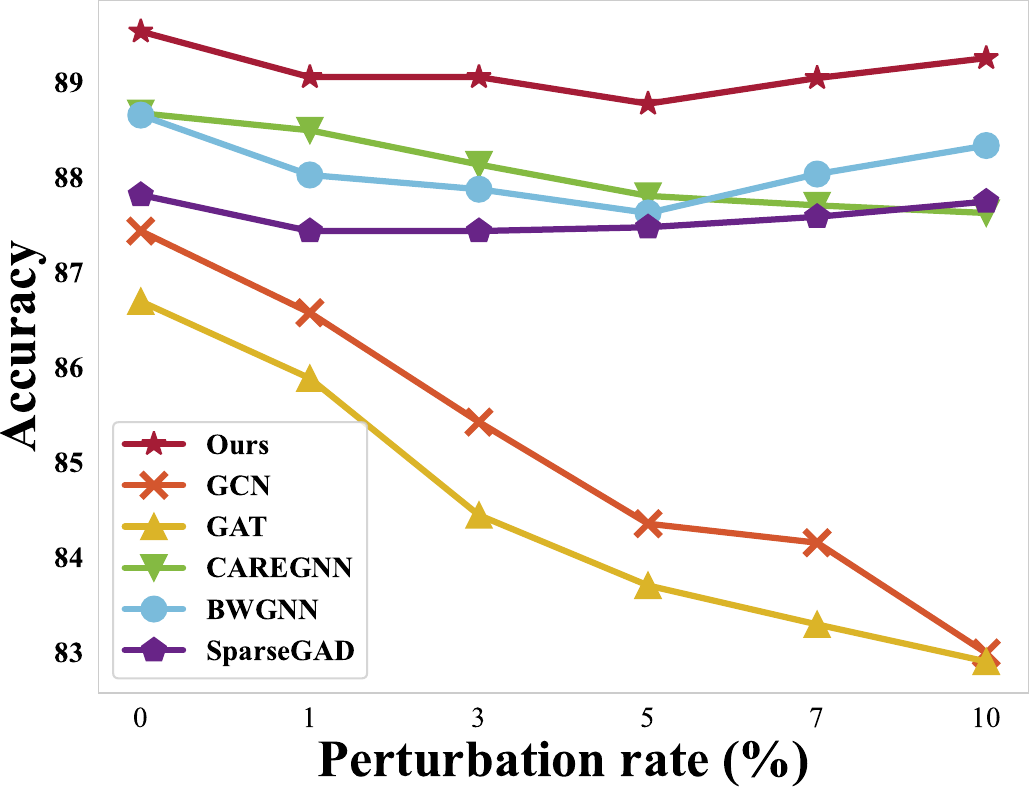}

            \caption{PubMed}
        \end{subfigure}

        \caption{Accuracy for different models under Heterophily Attack to increase Class Homophily Variance.}
        \label{fig:nonanomaly}
    \end{minipage}

\end{figure}

\subsection{Heterophily Attack and Generic Datasets}

To validate the adaptability of our model to high CHV in any scenario, we conducted tests on generic datasets and proposed Heterophily Attack to increase the CHV to simulate GAD datasets.


\subsubsection{Heterophily Attack}

Heterophily Attack is simple yet effective. To simulate anomaly detection datasets in generic datasets and increase the dataset's CHV, we targeted a single class for the attack. Suppose the targeted class is $C$. During the attack, we delete some edges $e$ where both end nodes $v$ belong to the attack class $C$, and then add some edges where one end node $v$ belongs to class $C$, and the other end node $v$ does not belong to class $C$. (Please refer to Appendix \ref{sec:heterattack} for more detailed description.) Our method has been tested and proven to effectively increase the CHV of datasets. Therefore, this method has simulated the most significant feature of GAD datasets on generic datasets. As shown in Table \ref{tab:heter}, on the PubMed dataset which contains only three classes, a 10\% edge perturbation increased the CHV by approximately 18 times. Although the Photo dataset has eight classes, a 10\% edge perturbation could still increase its CHV by more than four times.

\subsubsection{Results under Heterophily Attack}
We implemented Heter-ophily Attack on two datasets, Photo and PubMed, and compared several popular GNNs and GAD models. The experimental results are shown in Figure \ref{fig:nonanomaly}. 
We observed a significant decrease in the performance of vanilla GNNs like GCN and GAT when facing Heterophily Attack. This finding further confirms our previously proposed viewpoint, the significant difference between GAD and generic datasets is high CHV. It also proves to a certain extent the conclusion of Theorem \ref{thm:theorem1}, that the performance of vanilla models decreases when the CHV is very high.
Additionally, we noted that, apart from CAREGNN, the accuracy of other GAD models exhibited a slightly trend of initial decline followed by an increase, underscoring these models' adaptability to high CHV. Ultimately, our proposed \cmtt{HedGe} model outperformed all benchmark models in all attack ratios, proving its superior generality and robustness.

\subsection{Edgeless Node Classification}
Our \cmtt{HedGe} model, with its adaptive edge generation capability, 
has opened up new possibilities for GNN models in edgeless classification tasks. In our experiments, we removed the original relationships used as auxiliaries by \cmtt{HedGe} and relied solely on the edges generated by an attention-based edge sampler as input to the GNNs. By comparing with various classifiers that do not require edge information, we found that although tree-based classifiers like Random Forest and XGBoost have already shown excellent performance in GAD edgeless scenarios, even outperforming some GNNs and GAD models in many tasks, our \cmtt{HedGe} model still performed remarkably well in edgeless GAD tasks. 

As shown in Table 3, except for the AUC on Amazon, which did not achieve the best results, \cmtt{HedGe} led in all other evaluation metrics in other datasets and training ratios. In particular, on the Amazon and YelpChi, with 40\% labels for training, the AUC only dropped slightly (0.95\% and 1.7\%, respectively) compared to the situation with edges, effectively proving the efficacy of our edge generation strategy. We noticed that on the BlogCatalog dataset, there was a significant performance gap between edgeless and edged classification compared with other datasets, reflecting the importance of structural anomalies within this dataset, echoing the observation that GAD models focused on modification fail to achieve good results in previous experiments.


\subsection{Ablation Study}
In the detailed ablation study conducted on the \cmtt{HedGe} model, we focused on exploring the contribution and efficacy of each component of the model in the task of anomaly detection. The experiment was centered around the removal of three core components of the model: Position Encoding, Attention-based Edge Sampler, and the heterophily penalty term in the loss function. These variants were named \cmtt{HedGe}-w/pos, \cmtt{HedGe}-w/sam, and \cmtt{HedGe}-w/loss, respectively, with results shown in the last four rows of Table \ref{tab:main-exper}. The experiment results revealed several key findings. 

Firstly, with the exception of the 40\% training ratio for the Amazon dataset, the attention-based edge sampler significantly affected model performance, especially when the training set was extremely small. In sparse label environments, 1\% training ratio, the removal of the edge sampler led to a notable decrease in the AUC metric on the Amazon and YelpChi datasets, by 3.14\% and 2.95\%, respectively. This indicates that when training data is limited, the model's performance under high CHV is seriously constrained, and the edge sampler, by generating more homophilic edges, reduces the learning difficulty and thus improves GNN performance. Additionally, this series of ablation experiments also emphasized the importance of the homophily focus prior brought about by the heterophily edge suppression module in the loss function for improving model performance, as well as the critical role of position encoding in graph learning and graph anomaly detection.

\begin{figure}[t]
    \centering
    \begin{minipage}{0.47\textwidth}
        \begin{subfigure}[b]{0.49\linewidth}
            \centering
            \includegraphics[width=\linewidth]{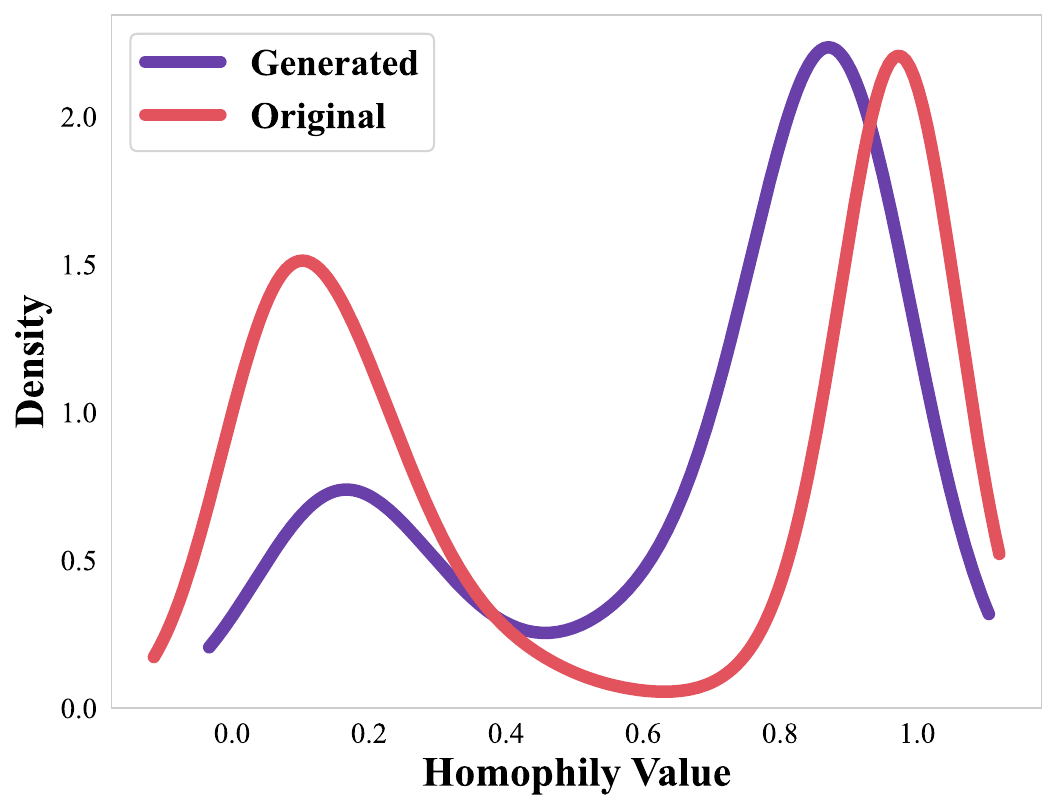}
            \caption{Amazon(0.1655/0.0312)}
        \end{subfigure}
        \hfill
        \begin{subfigure}[b]{0.49\linewidth}
            \centering
            \includegraphics[width=\linewidth]{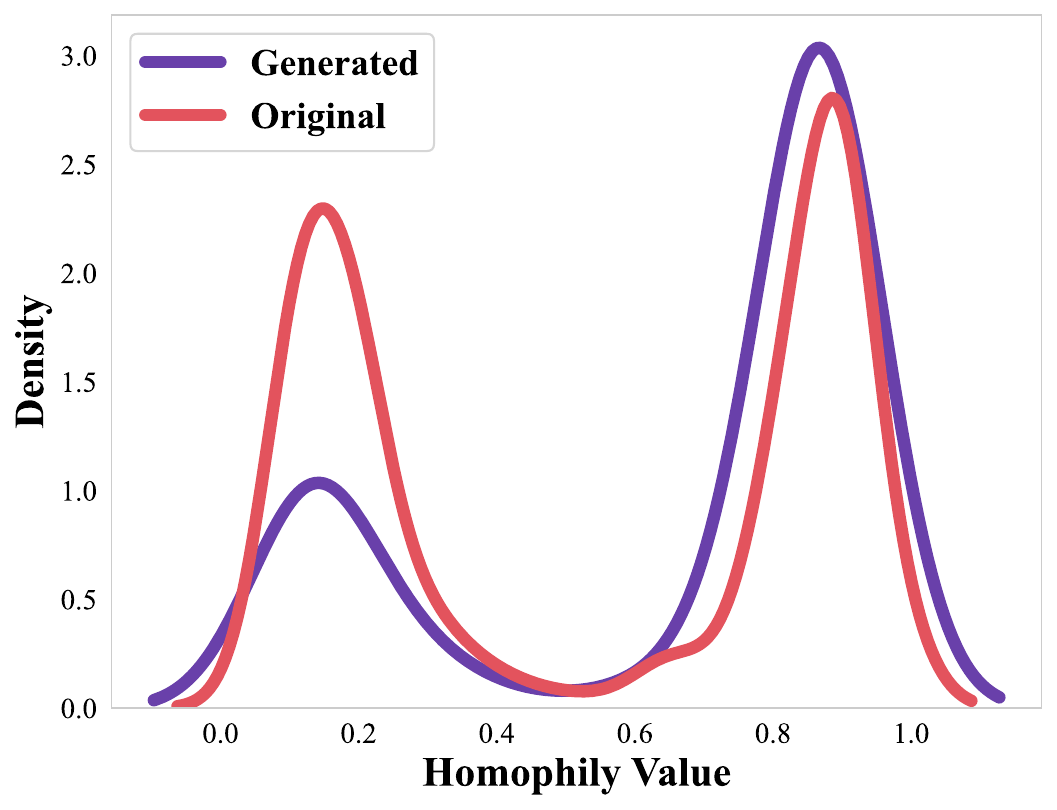}
            \caption{YelpChi(0.1101/0.0159)}
        \end{subfigure}
        \caption{Weighted homophily density distribution of the original and generated graphs. The pair of numbers enclosed by parentheses  presents the Class Homophily Variance of the original and generated graphs respectively.}
        \label{fig:casestudy}
    \end{minipage}

\end{figure}

\subsection{Interpretability}
Our model has successfully mitigated the original bimodal feature of the homophily distribution, as evidenced by the weighted homophily density distribution graphs as shown in Figure \ref{fig:casestudy}.  In the generated edges, the peak density on the right is more than three times higher than that on the left, whereas in the original graphs, two peaks have roughly the same density.  Furthermore, we also measure the CHV of the original and generated graphs. For Amazon, it decreased from 0.1655 to 0.0312, and for Yelp, it decreased from 0.1101 to 0.0159.  These reductions, both by at least a factor of five, demonstrate that the distribution of the edges generated by our method meets the expectations.

We also demonstrate the output results of different graph neural network models on the YelpChi after dimensionality reduction using t-SNE technique as shown in Figure \ref{fig:embeding}. Each subplot represents the output before the last layer of different models, where red represents anomalous nodes, and blue represents benign nodes. We have randomly downsampled the benign nodes to match the number of anomalous nodes for clarity.
Observing these visualizations, it is apparent that our model achieves more distinct separation in clustering compared to other models 
and produces clearer and more definitive groupings in t-SNE visualization. For instance, our model has significantly fewer overlaps in the red and blue areas compared to others, clearly showing an inverted U-shaped decision boundary. 


\begin{figure}[t]
    \centering
    \begin{subfigure}{0.12\textwidth}
        \centering
        \includegraphics[width=0.8\linewidth]{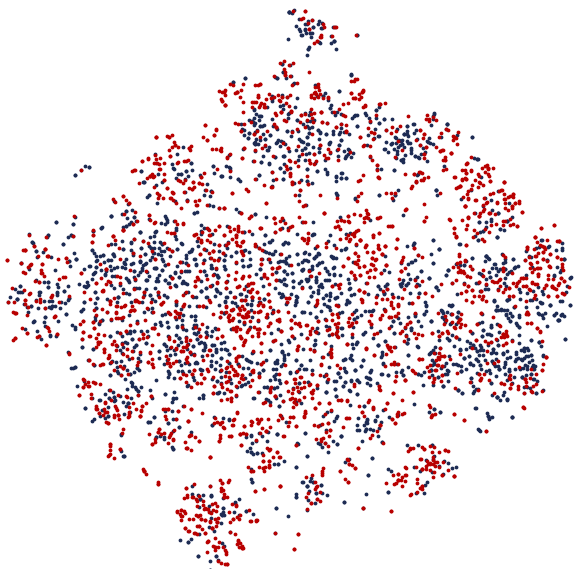}
        \vspace{5pt}
        \caption{Original}
        \vspace{10pt}
    \end{subfigure}%
    \begin{subfigure}{0.12\textwidth}
        \centering
        \includegraphics[width=0.8\linewidth]{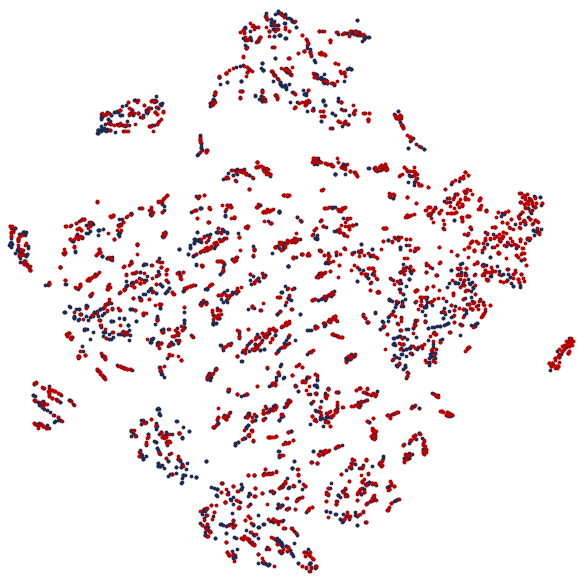}
        \vspace{5pt}
        \caption{GCN}
        \vspace{10pt}
    \end{subfigure}%
    \begin{subfigure}{0.12\textwidth}
        \centering
        \includegraphics[width=0.8\linewidth]{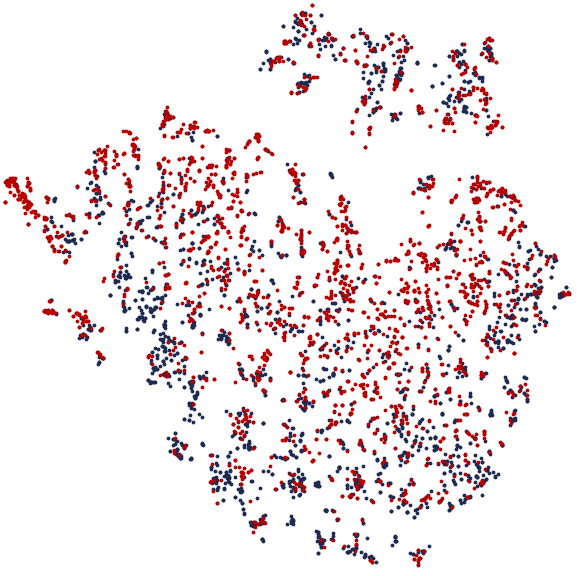}
        \vspace{5pt}
        \caption{GAT}
        \vspace{10pt}
    \end{subfigure}%
    \begin{subfigure}{0.12\textwidth}
        \centering
        \includegraphics[width=0.8\linewidth]{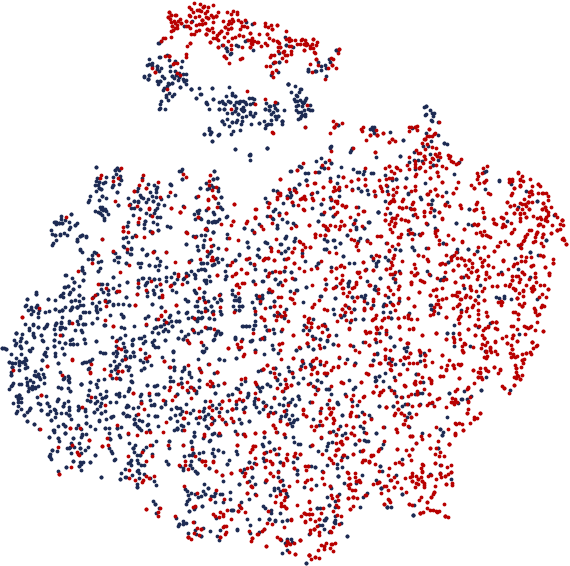}
        \vspace{5pt}
        \caption{GraphSAGE}
        \vspace{10pt}
    \end{subfigure}
    
    \begin{subfigure}{0.12\textwidth}
        \centering
        \includegraphics[width=0.8\linewidth]{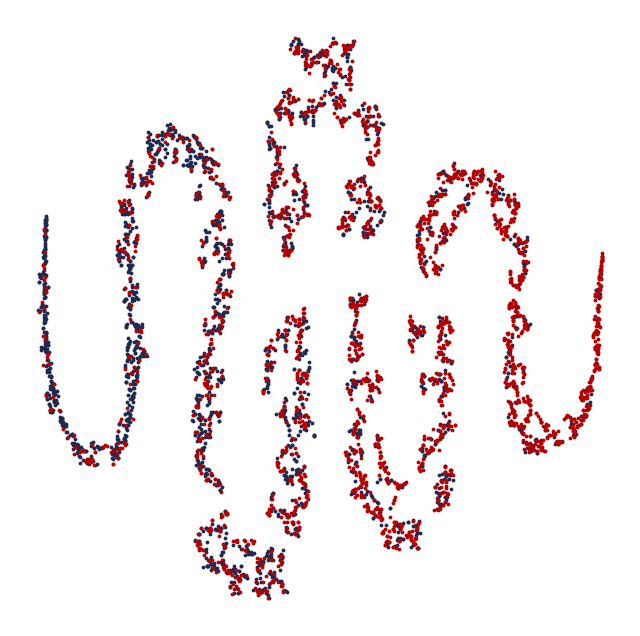}
        \vspace{5pt}
        \caption{GPRGNN}
        \vspace{10pt}
    \end{subfigure}%
    \begin{subfigure}{0.12\textwidth}
        \centering
        \includegraphics[width=0.8\linewidth]{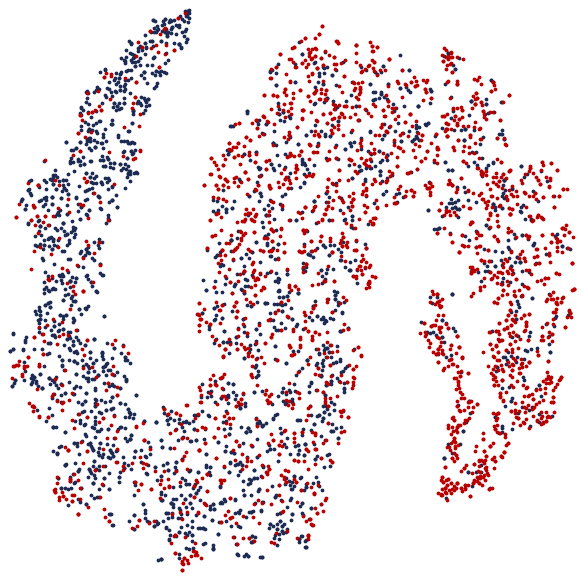}
        \vspace{5pt}
        \caption{CAREGNN}
        \vspace{10pt}
    \end{subfigure}%
        \begin{subfigure}{0.12\textwidth}
        \centering
        \includegraphics[width=0.8\linewidth]{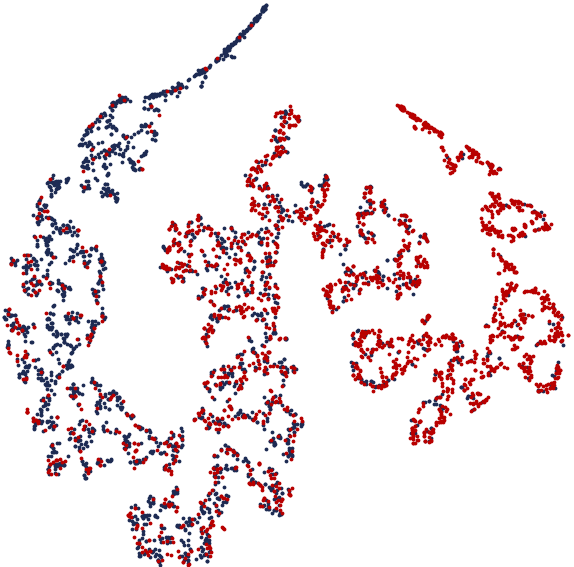}
        \vspace{5pt}
        \caption{PCGNN}
        \vspace{10pt}
    \end{subfigure}%
    \begin{subfigure}{0.12\textwidth}
        \centering
        \includegraphics[width=0.8\linewidth]{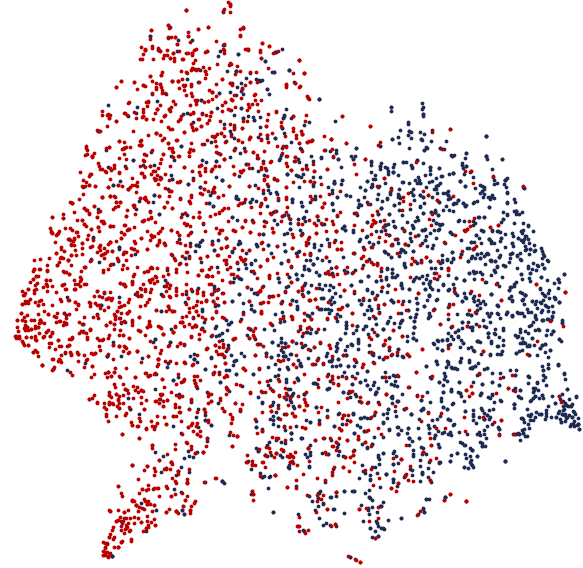}
        \vspace{5pt}
        \caption{AMNet}
        \vspace{10pt}
    \end{subfigure}

    \begin{subfigure}{0.12\textwidth}
        \centering
        \includegraphics[width=0.8\linewidth]{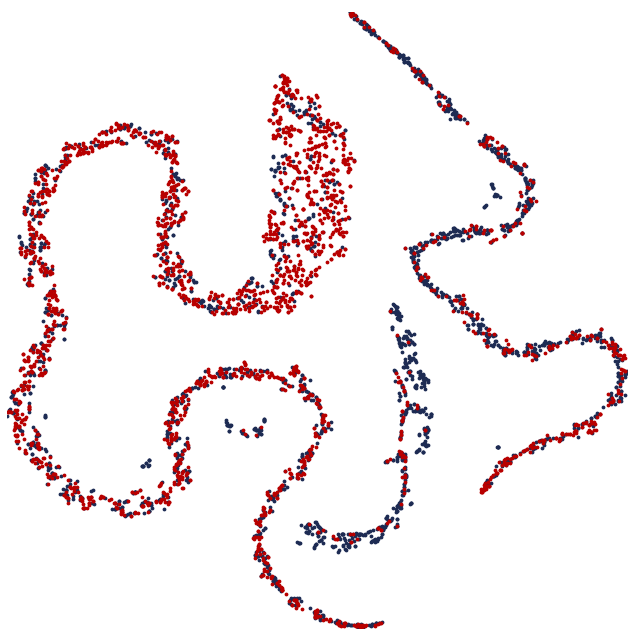}
        \vspace{5pt}
        \caption{H$^{2}$-FDetector}
    \end{subfigure}%
    \begin{subfigure}{0.12\textwidth}
        \centering
        \includegraphics[width=0.8\linewidth]{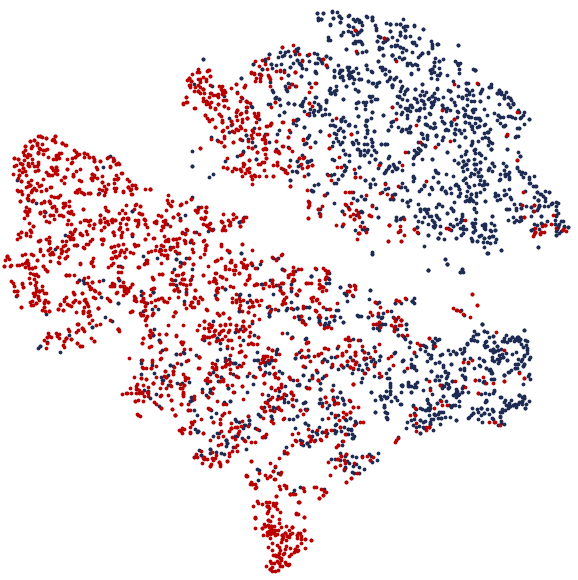}
        \vspace{5pt}
        \caption{BWGNN}
    \end{subfigure}%
    \begin{subfigure}{0.12\textwidth}
        \centering
        \includegraphics[width=0.8\linewidth]{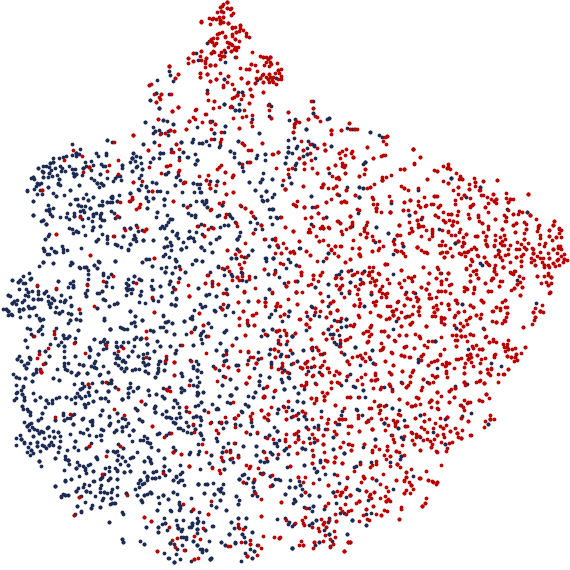}
        \vspace{5pt}
        \caption{GDN}
    \end{subfigure}%
    \begin{subfigure}{0.12\textwidth}
        \centering
        \includegraphics[width=0.8\linewidth]{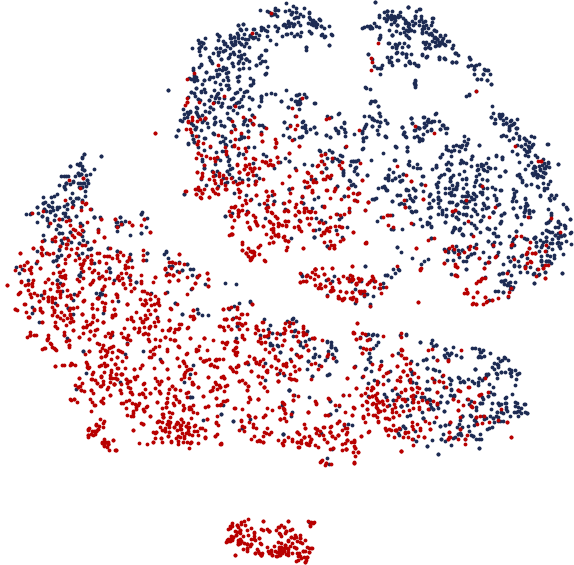}
        \vspace{5pt}
        \caption{Ours}
    \end{subfigure}
    \caption{t-SNE visualization of learned embeddings.}
    \label{fig:embeding}
\end{figure}

\section{Related Works}

\textbf{Graph Neural Network for Classification.}
Graph Neural Networks are deep learning models and are widely used in many fields such as drug repositioning research \cite{9_wang2022extending}, recommendation system \cite{liang2023ba} and relation classification \cite{4_li2019classifying,cheng2021efficient}. 
Vanilla GNNs like GCN \cite{kipf2016semi} are based on the homophily principle \cite{mcpherson2001birds}, but many real-world datasets are heterophilic and thus not suitable for them.
To alleviate this disparity, two strategies are employed \cite{zheng2022graph} : non-local neighbor and GNN architecture refinement. In models employing the non-local neighbor strategy, such as MixHop \cite{abu2019mixhop}, H2GCN \cite{zhu2020beyond}, UGCN \cite{jin2021universal}, and TDGNN \cite{wang2021tree}, the high-order neighbour mixing method is predominantly used, whereas models like Geom-GCN \cite{pei2020geom}, NL-GNN \cite{liu2021non}, and HOG-GNN \cite{wang2022powerful} primarily utilize the potential neighbor discovery method. Both approaches focus on extending local neighboring relationships to non-local ones. In models using the GNN Architecture Refinement strategy, such as FAGCN \cite{bo2021beyond} and WRGNN \cite{suresh2021breaking}, adaptive message aggregation is used; H2GCN \cite{zhu2020beyond} and WRGNN \cite{suresh2021breaking} employ the Ego-neighbor Separation method, and the GPRGNN \cite{chien2020adaptive} model utilizes the inter-layer combination method.
These methods have achieved good results on generic graphs. Due to the low CHV of these datasets, which is completely different from the GAD dataset, they do not perform well on GAD. Our models learns the representations between different classes and generate homophilic relations from scratch to input into GNNs, aligning with the data assumption paradigm of GNNs.

\hspace{-1em}\textbf{Graph-based Anomaly Detection.}
Many models for graph anomaly detection have been proposed. CAREGNN \cite{dou2020enhancing}, AOGNN \cite{huang2022auc} and PCGNN \cite{liu2021pick} employ reinforcement learning and resampling strategies to select neighborhoods. However, these models blindly aggregate neighboring nodes, leading to the camouflage of anomalies. AMNet \cite{chai2022can} and BWGNN \cite{tang2022rethinking} use multi-pass spectral filters to identify anomalies. Additionally, several works recognize the homophily differences between anomalies and benign nodes. GDN \cite{gao2023alleviating} uses a prototype vector to infer and update the distribution of anomaly features during training. SparseGAD \cite{gong2023beyond} sparsifies the structure of the target graph to effectively reduce noise and collaboratively learn node representations. GHRN \cite{gao2023addressing} trims inter-class edges by emphasizing and depicting the high-frequency components of graphs. 
There is still no way to quantitatively describe the difference in homophily between GAD and generic datasets. Most of these works are based on modifications. Because of the low homophily of anomalies, minor modifications  of relationships do not perform well. Also, modifications disturb structural information, leading to suboptimal performance in detecting structural anomalies. We propose CHV to describe homophily difference and, through generation rather than modification, combine original relationships as auxiliary for anomaly detection.

\section{Conclusion}
We provides a comprehensive analysis of how homophily distributions vary between anomaly detection datasets and others. It also proposes a novel metric, Class Homophily Variance, to effectively characterize these differences.
To address the issue of high CHV, we introduce \cmtt{HedGe}, which alleviates this problem by generating homophilic edges rather than modifying original relationships.
Experiments have demonstrated the effectiveness of our method in various scenarios including graph anomaly detection datasets, simulation and edgeless node classification, and have proven that the edges generated by the model have low CHV.

\bibliographystyle{ACM-Reference-Format}
\bibliography{sample-base}

\appendix
\section*{APPENDIX}
\setcounter{theorem}{0}
\setcounter{definition}{0}

\section{Theoretical result}
\subsection{Proof of Theorem \ref{thm:theorem1}}
\label{sec:proof}

\begin{theorem}
For a graph $\mathcal{G} \sim \text{CSBM-C}(\boldsymbol{\mu}_0, \boldsymbol{\mu}_1, d, h_0, h_1)$, for any node $i$ in $\mathcal{G}$, the smaller the value of $|h_0+h_1-1|$, the greater the probability that $\mathbf{h}_i$ will be misclassified by $\mathbf{h}$'s optimal linear classifier.
\end{theorem}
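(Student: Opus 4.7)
The plan is to reduce the classification problem to discriminating between two multivariate Gaussians whose class-conditional means differ by a factor controlled precisely by $|h_0 + h_1 - 1|$. First, I would compute the class-conditional distribution of the aggregated representation $\mathbf{h}_i$. Fix $i \in \mathcal{C}_k$. By the CSBM-C construction, the neighbors of $i$ are independent samples, with exactly $h_k d$ of them drawn from $N(\boldsymbol{\mu}_k, \mathbf{I})$ and $(1-h_k)d$ from $N(\boldsymbol{\mu}_{1-k}, \mathbf{I})$. Since $\mathbf{h}_i = \frac{1}{d}\sum_{j \in \mathcal{N}(i)} \mathbf{x}_j$ is a linear combination of independent Gaussians, we have
\begin{equation}
\mathbf{h}_i \mid i \in \mathcal{C}_0 \;\sim\; N\!\bigl(h_0 \boldsymbol{\mu}_0 + (1-h_0)\boldsymbol{\mu}_1,\; \tfrac{1}{d}\mathbf{I}\bigr),
\end{equation}
and symmetrically $\mathbf{h}_i \mid i \in \mathcal{C}_1 \sim N\bigl((1-h_1)\boldsymbol{\mu}_0 + h_1 \boldsymbol{\mu}_1,\; \tfrac{1}{d}\mathbf{I}\bigr)$.

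Second, I would compute the difference in class-conditional means. A direct calculation gives
\begin{equation}
\boldsymbol{\mu}_{\mathbf{h},0} - \boldsymbol{\mu}_{\mathbf{h},1} = (h_0 + h_1 - 1)(\boldsymbol{\mu}_0 - \boldsymbol{\mu}_1),
\end{equation}
so the two class-conditional Gaussians share the common covariance $\tfrac{1}{d}\mathbf{I}$ and are separated by $\lVert \boldsymbol{\mu}_{\mathbf{h},0} - \boldsymbol{\mu}_{\mathbf{h},1}\rVert = |h_0 + h_1 - 1| \cdot \lVert \boldsymbol{\mu}_0 - \boldsymbol{\mu}_1\rVert$.

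Third, I would invoke the classical result that for two isotropic Gaussians $N(\boldsymbol{\mu}_{\mathbf{h},0}, \sigma^2 \mathbf{I})$ and $N(\boldsymbol{\mu}_{\mathbf{h},1}, \sigma^2 \mathbf{I})$ with equal prior, Bayes' optimal classifier is linear (the hyperplane bisecting the two means orthogonally), and its misclassification probability equals
\begin{equation}
\mathbb{P}(\text{err}) \;=\; \Phi\!\Bigl(-\tfrac{1}{2\sigma}\lVert \boldsymbol{\mu}_{\mathbf{h},0} - \boldsymbol{\mu}_{\mathbf{h},1}\rVert\Bigr) \;=\; \Phi\!\Bigl(-\tfrac{\sqrt{d}}{2}|h_0+h_1-1|\cdot \lVert \boldsymbol{\mu}_0-\boldsymbol{\mu}_1\rVert\Bigr),
\end{equation}
where $\Phi$ is the standard normal CDF. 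Because $\Phi$ is strictly monotone increasing and $\lVert \boldsymbol{\mu}_0 - \boldsymbol{\mu}_1\rVert > 0$ by assumption, the right-hand side is strictly decreasing in $|h_0 + h_1 - 1|$. Therefore, as $|h_0 + h_1 - 1|$ shrinks, the misclassification probability of the optimal linear classifier on $\mathbf{h}$ strictly grows, which is exactly the claim.

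The main obstacle I expect is a subtle one: the theorem refers to \emph{the} optimal linear classifier trained on the whole population of $\mathbf{h}$'s, not a Bayes-optimal classifier trained per-class, and implicitly assumes balanced priors so that the Bayes rule coincides with the Fisher linear discriminant. I would need to justify this implicit assumption (or state it explicitly) and verify that for equal-covariance Gaussians the Bayes-optimal classifier is linear regardless of prior, with prior shifts only translating the threshold. Once this is pinned down, the rest reduces to the monotonicity of $\Phi(-c\cdot |h_0+h_1-1|)$, which is immediate.
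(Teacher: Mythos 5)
Your proposal is correct and follows essentially the same route as the paper: both derive the class-conditional Gaussians $N(h_0\boldsymbol{\mu}_0+(1-h_0)\boldsymbol{\mu}_1,\tfrac{1}{d}\mathbf{I})$ and $N((1-h_1)\boldsymbol{\mu}_0+h_1\boldsymbol{\mu}_1,\tfrac{1}{d}\mathbf{I})$, identify the optimal linear classifier as the perpendicular bisector of the two means, and reduce the claim to the fact that the mean-to-boundary separation equals $|h_0+h_1-1|\cdot\tfrac{1}{2}\lVert\boldsymbol{\mu}_0-\boldsymbol{\mu}_1\rVert$ while the covariance is independent of $h_0,h_1$. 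Your version is marginally more explicit in writing the error as $\Phi\bigl(-\tfrac{\sqrt{d}}{2}|h_0+h_1-1|\,\lVert\boldsymbol{\mu}_0-\boldsymbol{\mu}_1\rVert\bigr)$, whereas the paper stops at the monotone relationship between distance and misclassification probability, but the substance is identical.
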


We referred to the approach of previous work \cite{ma2021homophily} and used the distance from the expected value to the optimal decision boundary to approximate the probability of misclassification. 
Unlike their work, which focused on proving when representations obtained by GNNs are better than the original representations, our focus is on the impact of class homophily differences on node classification.

\begin{proof}
Firstly, since the distribution of the node's neighborhood is known, and each neighbor can be treated as independent random variable, we can calculate the Gaussian distribution that $\mathbf{h}$ conforms to,
\begin{equation}
\mathbf{h}_i \sim 
\begin{cases} 
N\left(h_0\boldsymbol{\mu}_0+(1-h_0)\boldsymbol{\mu}_1,\frac{\mathbf{I}}{d} \right) & \text{if } i \in C_0 \\
N\left((1-h_1)\boldsymbol{\mu}_0+h_1\boldsymbol{\mu}_1,\frac{\mathbf{I}}{d} \right) & \text{if } i \in C_1 
\end{cases}.
\end{equation}
We can calculate the middle point and the direction of $\left( \mathbb{E}_{c_0}(\mathbf{h}), \mathbb{E}_{c_1}(\mathbf{h}) \right)$, which is $\mathbf{m} = \frac{(1+h_0-h_1)\boldsymbol{\mu}_0+(1-h_0+h_1)\boldsymbol{\mu}_1}{2}$, $\mathbf{w} = \frac{\boldsymbol{\mu}_0-\boldsymbol{\mu}_1}{\|\boldsymbol{\mu}_0-\boldsymbol{\mu}_1\|_2}$ respectively. Noting that $\mathbb{E}(.)$ means mathematical expectation.
From the analysis above, we can know that the optimal hyperplane that distinguishes the two features is orthogonal to $\mathbf{w}$ and passes through the point $\mathbf{m}$. We define this optimal hyperplane 
\begin{equation}
\mathcal{P} = \{ \mathbf{x} | \mathbf{w}^T\mathbf{x} - \mathbf{w}^T\mathbf{m} \}.
\end{equation}
In this context, we only articulate the scenario where node $i \in C_0$, as the case for $i \in C_1$ belonging to is symmetrical and identical. We define the probability of $\mathbf{h}_i$ being misclassified as 
\begin{equation}
    \mathbb{P}_{\text{mis}}(\mathbf{h}_i) =\mathbb{P}( \mathbf{w}^T\mathbf{h}_i - \mathbf{w}^T\mathbf{m} \leq 0 ), \text{for } i \in C_0.
\end{equation}
Because the variance of \( \mathbf{h}_i \) is independent of \( h_0 \) and \( h_1 \), under the same variance, the smaller the mathematical expectation of \( \mathbf{h}_i \) is from the decision boundary, the greater the probability of \( \mathbf{h}_i \) being misclassified. We calculate the distance of expected value of \( \mathbf{h}_i \) from the optimal decision boundary \( \mathcal{P} \) as
\begin{align}
    \text{dis}(\mathbf{h}_i) &= \frac{\left| \mathbf{w}^T\left(h_0\boldsymbol{\mu}_0+(1-h_0)\boldsymbol{\mu}_1 \right) - \mathbf{w}^T\frac{(1+h_0-h_1)\boldsymbol{\mu}_0+(1-h_0+h_1)\boldsymbol{\mu}_1}{2}  \right|}{\| \mathbf{w}^T \|_2} \\
    & = \left| \mathbf{w}^T\frac{(h_0+h_1-1)\boldsymbol{\mu}_0 + (1-h_0-h_1)\boldsymbol{\mu}_1 }{2} \right| \\
    & = \left| \mathbf{w}^T\frac{(h_0+h_1-1)(\boldsymbol{\mu}_0-\boldsymbol{\mu}_1)}{2} \right| \\
    & = \left| h_0+h_1-1 \right|\frac{\|\boldsymbol{\mu}_0-\boldsymbol{\mu}_1\|_2}{2} , \text{for } i \in C_0.
\end{align}

Therefore, the distance of $\mathbf{h}_i$ from the optimal decision boundary \( \mathcal{P} \) is directly proportional to $| h_0+h_1-1|$, thus completing the proof.
\end{proof}

\subsection{Class Homophily Variance under CSBM-C}
\label{sec:proofc}
\begin{theorem}
For a graph $\mathcal{G} \sim \text{CSBM-C}(\boldsymbol{\mu}_0, \boldsymbol{\mu}_1, d, h_0, h_1)$, the Class Homophily Variance of graph $\mathcal{G}$ is $\text{Var}(\bar{\mathcal{H}})_{\mathcal{G}} = \frac{( h_0 - h_1)^2}{4}$.
\end{theorem}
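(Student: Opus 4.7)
The plan is to compute the class-averaged homophily $\bar{\mathcal{H}}(C_k)$ directly from the definition of CSBM-C and then plug into Definition 1 with $|\mathcal{S}| = 2$. By construction of CSBM-C, every node $i \in C_k$ has exactly $d$ neighbors, of which $h_k \cdot d$ share its label, so $\mathcal{H}(i) = h_k$ is constant on each class. Therefore $\bar{\mathcal{H}}(C_0) = h_0$ and $\bar{\mathcal{H}}(C_1) = h_1$, with no averaging subtlety to worry about.

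Next I would compute the inter-class mean $\mu = \tfrac{1}{2}(h_0 + h_1)$ and the two signed deviations $\bar{\mathcal{H}}(C_0) - \mu = \tfrac{1}{2}(h_0 - h_1)$ and $\bar{\mathcal{H}}(C_1) - \mu = -\tfrac{1}{2}(h_0 - h_1)$. Substituting into
\begin{equation*}
\text{Var}(\bar{\mathcal{H}})_{\mathcal{G}} = \frac{1}{2}\Bigl[\bigl(\bar{\mathcal{H}}(C_0) - \mu\bigr)^2 + \bigl(\bar{\mathcal{H}}(C_1) - \mu\bigr)^2\Bigr]
\end{equation*}
gives $\frac{1}{2}\cdot 2 \cdot \bigl(\tfrac{h_0-h_1}{2}\bigr)^2 = \frac{(h_0 - h_1)^2}{4}$, which is the claimed identity.

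There is essentially no obstacle here; the only thing worth stating carefully is why $\mathcal{H}(i)$ is deterministically equal to $h_k$ rather than merely equal in expectation. This follows because the CSBM-C construction prescribes the neighbor composition of each node exactly (each node in $C_k$ is assigned $h_k d$ same-class and $(1-h_k)d$ cross-class neighbors), so no concentration or law-of-large-numbers argument is needed. The result then drops out from a one-line variance calculation on two points, and it immediately justifies the paper's claim that, under the CSBM-C assumption, CHV is proportional to $(h_0 - h_1)^2$, tying Theorem 1 (where the misclassification-relevant quantity is $|h_0+h_1-1|$) to CHV as a complementary but aligned indicator in the GAD regime.
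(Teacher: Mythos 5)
Your proof is correct and follows essentially the same route as the paper's: observe that $\mathcal{H}(i)=h_k$ deterministically for every node in $C_k$ by the CSBM-C construction, hence $\bar{\mathcal{H}}(C_k)=h_k$, then plug into Definition~\ref{thm:df1} with $\mu=\tfrac{h_0+h_1}{2}$ to get $\tfrac{(h_0-h_1)^2}{4}$. Your explicit remark that no concentration argument is needed is a small but worthwhile clarification the paper leaves implicit.
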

\begin{proof}
For node $i$, if $ i \in C_0$, then its homophily value $ \mathcal{H}(i) = \frac{h_0 \cdot d}{d} = h_0$, and if $ i \in C_1$, then its homophily value $ \mathcal{H}(i) = \frac{h_1 \cdot d}{d} = h_1$. 

Since each class has the same contribution, the average inter-class homophily is $ \mu = \frac{h_0+h_1}{2}$. Then we can calculate the  of graph $\mathcal{G}$,
\begin{align}
    \text{Var}(\bar{\mathcal{H}})_{\mathcal{G}} &= \frac{\left(h_0 -\frac{h_0+h_1}{2}\right)^2 + \left(h_1 -\frac{h_0+h_1}{2}\right)^2}{2} \\
    &= \frac{(h_0-h_1)^2}{4}.
\end{align}
This completes the proof.
\end{proof}

\section{Data Analysis on more datasets}
\label{sec:dataset}

\begin{table}[htbp]
\centering
\caption{Homophily analysis on different datasets.}
\label{tab:morehomo}
\newcolumntype{C}{>{\centering\arraybackslash}X} 
\newcolumntype{Y}[1]{>{\centering\arraybackslash}p{#1}}
{\small
\begin{tabularx}{0.45\textwidth}{@{}Y{1.5cm}|Y{1.5cm}|CCC@{}}
\toprule
Types & Dataset & $\text{Var}(\bar{\mathcal{H}})$ &  $\overline{\text{Var}}_C(\mathcal{H})$ & $\mu_w$ \\
\midrule
\multirow{4}{*}{Anomaly} & Amazon & 0.1655 & 0.0082 & 0.5579 \\
& YelpChi & 0.1101 & 0.0130 & 0.5373 \\
& BlogCatalog & 0.1378 & 0.0099 & 0.5737 \\
& Reddit & 0.1639 & 0.0129 & 0.5896 \\
\midrule 
\multirow{4}{*}{Homophily} 
& Cora & 0.0030 & 0.0854 & 0.8129 \\
& PubMed & 0.0044 & 0.1258 & 0.7766 \\
& Citeseer & 0.0200 & 0.1477 & 0.6861 \\
& Photo & 0.0171 & 0.0433 & 0.8293 \\

\midrule
\multirow{4}{*}{Heterophily}
 & Texas & 0.0198 & 0.0774 & 0.1080 \\
 & Chameleon & 0.0093 & 0.0472 & 0.2550 \\
 & Squirrel & 0.0018 & 0.0320 &0.2190 \\
 & Cornell & 0.0364 & 0.0771 & 0.1844 \\

\bottomrule
\end{tabularx}}
\end{table}

In order to more clearly illustrate the class homophily variance for different graph classification tasks, we provide statistical analysis results for 12 commonly used datasets, as shown in Table \ref{tab:morehomo}. These datasets include four GAD datasets, four homophilic graphs, and four heterophilic graphs. Notably, in these  datasets, none have a CHV exceeding 0.05, whereas in GAD datasets, none are below 0.1. Taking the  dataset Cornell as an example, it has the highest CHV at 0.0364, which is still about three times lower than the lowest in the GAD datasets, i.e., 0.1101 of YelpChi. This fact further confirms our view on the uniqueness of GAD datasets and is consistent with our previous discussion. The in-class homophily variance for all datasets is relatively low, but it is also evident that the in-class homophily variance of GAD datasets is smaller. Furthermore, the weighted average indicators further indicate that, compared to homophilic and heterophilic graphs, GAD datasets do not show a significant tendency in terms of homophily and are all quite close to 0.5. In contrast, homophilic or heterophilic graphs tend to be closer to either 0 or 1.

\section{Heterophily Attack}
\label{sec:heterattack}
In this section, we provide a detailed algorithmic description of our heterophily attack. 
We based on the given adjacency matrix and the attack ratio, calculate the total number of edges that need to be modified. 
Next, we identify the nodes that need to be attacked. If there is an edge between two nodes that are both marked with an attack label, then this edge may be removed. Afterward, we randomly add edges between nodes with attack labels and those without until the predetermined number of modifications is reached. To ensure the graph is undirected, we attack the edges above the diagonal, zero out everything below the diagonal, and then obtain an undirected graph by adding the adjacency matrix to its transpose.
Please refer to Algorithm \ref{alg:heter} for the pseudocode.

\begin{algorithm}
\caption{Heterophily Attack}
\label{alg:heter}
\begin{algorithmic}[1]
\Require    
The adjacency matrix of the graph $adjacency\_matrix$, 
the number of nodes $num\_nodes$, 
labels of all nodes $labels$, 
class need to be attacked $attack\_label$, 
attack ratio $ratio$ 

\Ensure Modified adjacency matrix

\For{$i = 1$ to $num\_nodes$}
    \For{$j = 1$ to $i$}
        \State $adjacency\_matrix[i, j] \gets 0$
    \EndFor
\EndFor

\State $num\_modifications \gets \text{sum}(adjacency\_matrix) \times ratio$

\State $attack\_index \gets$ indices of nodes where $labels = attack\_label$
\State Initialize $remove\_list$ as an empty list
\For{each $i$ in $attack\_index$}
    \For{each $j$ in $attack\_index$}
        \If{$adjacency\_matrix[i, j] \neq 0$}
            \State Append $(i, j)$ to $remove\_list$
        \EndIf
    \EndFor
\EndFor

\State $n \gets \min(num\_modifications, \text{len} (remove\_list))$
\State $remove\_index \gets$  random $n$ items from $remove\_list$
\For{each $index$ in $remove\_index$}
    \State $(i, j) \gets remove\_list[index]$
    \State $adjacency\_matrix[i, j] \gets 0$ 
    \State $adjacency\_matrix[j, i] \gets 0$ 
\EndFor

\State $i\_indices \gets$ indices of nodes where $labels = attack\_label$
\State $j\_indices \gets$ indices of nodes where $labels \neq attack\_label$

\For{$k$ in $1$ to $num\_modifications$}
    \Repeat
        \State $i \gets$ random item from $i\_indices$
        \State $j \gets$ random item from $j\_jindices$
        \If{$i < j$ and $adjacency\_matrix[i, j] = 0$}
            \State $adjacency\_matrix[i, j] \gets 1$
            \State \textbf{break}
        \EndIf
    \Until{a new edge is added}
\EndFor

\State $result\_matrix \gets adjacency\_matrix + adjacency\_matrix^\top$
\State \Return $result\_matrix$
\end{algorithmic}
\end{algorithm}

\begin{table*}[thbp]
\centering
\caption{Statistics of four anomaly detection datasets.}
\label{tab:ana}
\newcolumntype{C}{>{\centering\arraybackslash}X} 
\newcolumntype{Y}[1]{>{\centering\arraybackslash}p{#1}}
\begin{tabularx}{0.95\textwidth}{@{}Y{2cm}|CY{2.5cm}CCY{1.5cm}CCC@{}}
\toprule
\textbf{Dataset} & \textbf{Type}& \textbf{Scenarios} & \textbf{Node} & \textbf{Relations} & \textbf{Edge} &\textbf{Features} & \textbf{Anomalies} & \textbf{Rate} \\
\midrule
\multirow{3}{*}{YelpChi}&\multirow{3}{*}{Real} & \multirow{3}{*}{Review} & \multirow{3}{*}{45,954} & R-U-R & 49,315   &\multirow{3}{*}{32}& \multirow{3}{*}{6,674} & \multirow{3}{*}{14.52\%} \\
                       &  &                         &                        & R-S-R & 3,402,743 &&                    & \\
                       &  &                         &                        & R-T-R & 573,616   &&                    & \\
\midrule
\multirow{3}{*}{Amazon}&\multirow{3}{*}{Real} & \multirow{3}{*}{Review} & \multirow{3}{*}{11,944} & U-P-U & 175,608   &\multirow{3}{*}{25}& \multirow{3}{*}{821} & \multirow{3}{*}{9.50\%} \\
                       & &                         &                        & U-S-U & 3,566,479  &&                    & \\
                       & &                         &                         & U-V-U & 1,036,737 &&                    & \\
\midrule
Reddit      & Real &Social Networks & 10,984  & - & 175,608 & 64& 366 & 3.33\% \\
\midrule
BlogCatalog & Inject &Social Networks & 5,196   & - & 171,743 &8,189& 300 & 5.77\% \\
\bottomrule
\end{tabularx}
\end{table*}

\begin{table}[t]
\centering
\caption{Statistics of two generic datasets.}
\label{tab:gen}
\newcolumntype{C}{>{\centering\arraybackslash}X} 
\newcolumntype{Y}[1]{>{\centering\arraybackslash}p{#1}}
\begin{tabularx}{0.40\textwidth}{@{}C|CCCC@{}}
\toprule
\textbf{Dataset} & \textbf{Classes} & \textbf{Features} & \textbf{Nodes} & \textbf{Edge} \\
\midrule
Photo & 8 & 745 & 7,650 & 119,081 \\
PubMed & 3 & 500 & 19,717 & 44,324 \\
\bottomrule
\end{tabularx}
\end{table}

\section{EXPERIMENT settings}
\subsection{Weighted Homophily Density Distribution}
Here, we clarify how to draw the Weighted Homophily Density Distribution graph. We first calculate the homophily value $\mathcal{H}(v)$ for each node, as well as its weight $w$, where $w$ is the reciprocal of the proportion of its class in all nodes. Then, we use a kernel density estimator to fit the data and form a curve for easier visualization.

\subsection{Detailed Description of the Datasets}
\label{sec:detail}

Here, we provide a detailed description of each dataset.

The YelpChi dataset \cite{rayana2015collective} collects hotel and restaurant reviews from Yelp. This dataset treats reviews as nodes and establishes three types of relationships: 1) R-U-R: between reviews published by the same user, 2) R-S-R: between reviews of the same star level for the same product, and 3) R-T-R: between reviews posted in the same month for the same product. The Amazon dataset \cite{mcauley2013amateurs} focuses on product reviews in the musical instruments category on Amazon. Here, the nodes are users, and it also includes three types of relationships: 1) U-P-U: between users who have reviewed at least one common product, 2) U-S-U: between users who have given the same star rating within a week, and 3) U-V-U: between users who have the top 5\% mutual review text similarities. We build the edges of YelpChi and Amazon following previous work \cite{dou2020enhancing}. BlogCatalog \cite{tang2009relational} is a social blog directory whose main function is to allow users to discover and follow other blog authors. In this community, each member is considered as a node, and the interactions or follow relationships between members are seen as treated connecting these nodes. The attributes of these nodes are primarily used to describe various tags related to the users and their blog content. In this network, some nodes are deliberately set with structural and contextual anomalies according to previous work \cite{liu2021anomaly}. Reddit \cite{kumar2019predicting}, a well-known social media platform, has a forum post network that includes users banned by the platform, marked as anomalies. We load this dataset by PyGod \cite{liu2022pygod} package. The content of these users' posts is transformed into attribute vectors to represent their characteristics and behavioral patterns. The statistics of these four datasets can be found in Table \ref{tab:ana}. Note that in the Amazon dataset, there are 3305 nodes without labels.

PubMed \cite{yang2016revisiting} is a large biomedical literature database organized in a graph structure. Each piece of literature in the PubMed dataset can be considered a node, and the citation relationships between these pieces of literature form the edges. Photo \cite{shchur2018pitfalls} is derived from Amazon's co-purchase network of products. In this dataset, nodes typically represent products (in this context, products related to photography), and edges represent the co-purchasing relationships between these products. The statistics of these two generic datasets can be found in Table \ref{tab:gen}.

\begin{figure*}[ht]
    \centering
    \begin{minipage}{0.49\textwidth}
        \centering
        \includegraphics[width=\linewidth]{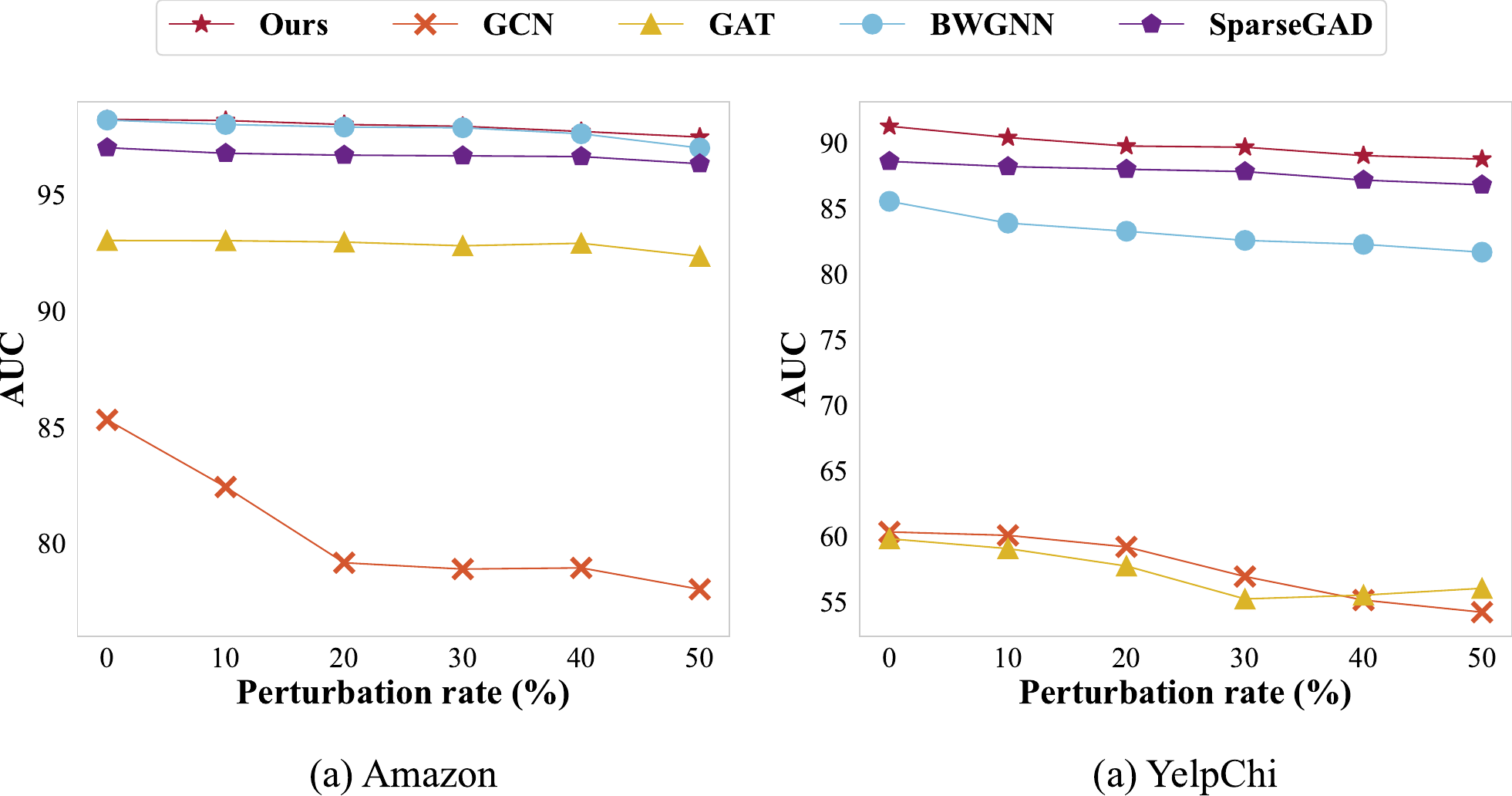}
        \caption{AUC under random attack.}
    \end{minipage}
    \hfill
    \begin{minipage}{0.49\textwidth}
        \centering
        \includegraphics[width=\linewidth]{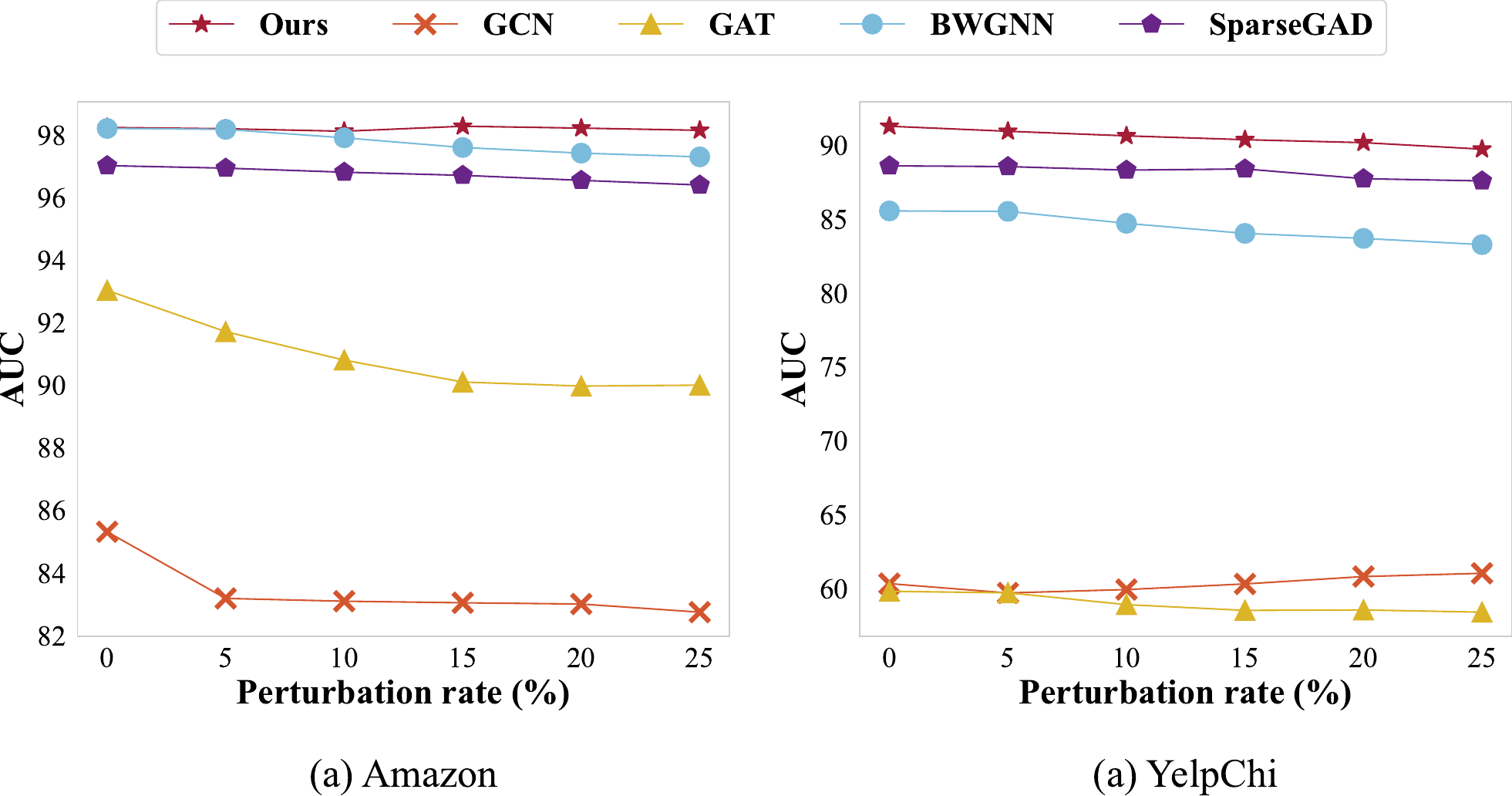}
        \caption{AUC under non-targeted attack.}
    \end{minipage}
\end{figure*}

\subsection{Baselines}
\label{sec:base}
Here, we provide a detailed description of our comparison methods.

The models below are common classic GNNs. They are widely used in various GNN tasks and possess excellent versatility.
\begin{itemize}
    \item \textbf{GCN} \cite{kipf2016semi} is a type of GNN that utilizes graph convolution operations to learn node representations in graph-structured data. It updates each node's features by aggregating the feature information of neighboring nodes.
   \item \textbf{GAT} \cite{velivckovic2018graph} introduces the attention mechanism into graph neural networks. In this model, nodes update their features by aggregating the features of their neighbors, weighted by dynamically computed attention scores. 
   \item \textbf{GraphSAGE} \cite{hamilton2017inductive} is an inductive learning graph neural network that updates the features of target nodes by sampling and aggregating a fixed-size set of neighboring nodes. 
\end{itemize}
The following models are optimized for the heterophily in graphs.
\begin{itemize}
    \item \textbf{MixHop} \cite{abu2019mixhop} utilizes a novel graph convolutional layer. It improves feature learning on graph data by blending information from neighbors at different hops.
    \item \textbf{GPRGNN} \cite{chien2020adaptive} combines the general PageRank algorithm with an adaptive mechanism. This is used for node importance assessment and attribute prediction on graph data.
\end{itemize}
Models specifically designed for anomaly detection usually exploit selection, pruning, and filtering techniques. We have chosen the state-of-the-art models along with some classic works.
\begin{itemize}
    \item \textbf{CAREGNN} \cite{dou2020enhancing} utilizes label-aware similarity to identify neighborhoods, employs reinforcement learning to determine the optimal number of neighbors, and aggregates selected neighbors across different relationships.
    \item \textbf{PCGNN} \cite{liu2021pick} effectively handles class imbalance by selectively sampling nodes and neighbors for improved learning and detection accuracy.
    \item \textbf{AMNet} \cite{chai2022can} adaptively combines multi-frequency signals for improved anomaly detection in graphs.
    \item \textbf{H$^2$-FDetector} \cite{shi2022h2} effectively identifies fraud by differentiating and aggregating information from both homophilic and heterophilic connections in a network.
    \item \textbf{BWGNN} \cite{tang2022rethinking} leverages spectral and spatial localized band-pass filters for enhanced anomaly detection in graphs, effectively addressing the right-shift spectral phenomenon.
    \item \textbf{GDN} \cite{gao2023alleviating} effectively addresses anomaly detection in graphs by dynamically adjusting to structural distribution shifts, optimizing for both anomalies and normal nodes.
    \item \textbf{SparseGAD} \cite{gong2023beyond} enhances detection quality by sparsifying graph structures and learning node representations to uncover hidden dependencies in relational data.
\end{itemize}

\subsection{Implementation Details}
\subsubsection{Experiment Platform.} We conducted experiments on a Linux server equipped with an Intel Xeon 5220 CPU, a Tesla V100 32GB GPU, and 64GB of RAM. In addition, for experiments conducted on YelpChi, we utilized an A800 80GB GPU to accelerate computations.

\subsubsection{Experiment Settings}
In anomaly detection datasets, when 40\% of the labels are used for training, we use 30\% as the validation set and 30\% as the test set. When 1\% of the labels are used for training, we use 49.5\% as the validation set and 49.5\% as the test set. When testing on the Photo and PubMed datasets, we use 40\% as the training set, 30\% as the validation set, and 30\% as the test set. We test on the validation set every 10 epochs, and select the model that performs best on the validation set to evaluate on the test set.
Inspired by CAREGNN \cite{dou2020enhancing}, small-batch data learning is beneficial in reducing model overfitting and improving model efficiency. Compared to random partitioning, we adopted the approach of ClusterGCN \cite{chiang2019cluster} to divide subgraphs on the Amazon, YelpChi, and PubMed datasets. This was done to accelerate model learning and reduce the occurrence of overfitting. Our experimental results are the average of 10 runs. For all baselines, if the original hyperparameters are provided, we use them. If not, we perform grid search using learning rates of in the set of \{0.01, 0.003, 0.001\} and the number of hidden layers of in \{16, 32, 64\}.

\section{More Experiments}
To further demonstrate the robustness of our model, we also show the precision changes of our model under two common attack methods on GAD datasets. First, let's introduce the two attack methods we adopted.

\begin{itemize}
    \item \textbf{Random Attack}: This type of attack randomly deletes a certain proportion of edges and then randomly adds a certain proportion of edges to create edge perturbations.
    \item \textbf{Non-targeted Attack}: This attack aims to target the entire graph rather than reducing the accuracy of certain nodes. Here, we chose the classic DICE attack \cite{zugner2019adversarial}.
\end{itemize}
In this experiment, we also randomly split the datasets into training, validation, and test sets with a ratio of 4:3:3. We used homogeneity graphs in both attacks, meaning the entire graph has only one type of relationship. Therefore, the performance of some models is inconsistent with that in heterogeneity graphs. Notably, BWGNN showed a significant decline on YelpChi.

It can be seen that vanilla GNNs, such as GAT and GCN, are more susceptible to attacks, showing a significant decline in performance from their original levels. In contrast, the GAD models demonstrates stronger robustness, with a smaller decline when faced with various attacks compared to vanilla GNNs. It is evident that our model exhibits strong robustness when facing multiple attacks, outperforming all baseline models comprehensively, and also has a relatively small decline in performance. These results fully demonstrate the effectiveness of our edge generation strategy in reducing the impact of attacks, thereby ensuring the model's robust performance in a changing environment.

\end{document}